\def\eqref#1{equation~\ref{#1}}
\def\1{\bm{1}}
\DeclareMathAlphabet{\mathsfit}{\encodingdefault}{\sfdefault}{m}{sl}
\SetMathAlphabet{\mathsfit}{bold}{\encodingdefault}{\sfdefault}{bx}{n}
\DeclareMathOperator*{\argmax}{arg\,max}
\newcommand{\ceq}{\stackrel{\mathclap{\normalfont\mbox{c}}}{=}}
\newcommand{\cleq}{\stackrel{\mathclap{\normalfont\mbox{c}}}{\leq}}
\newcommand{\dd}{\mathbf{d}}
\newcommand{\xx}{\mathbf{x}}
\newcommand{\llll}{\mathbf{l}}
\newcommand{\pp}{\mathbf{p}}
\newcommand{\yy}{\mathbf{y}}
\newcommand{\s}{\mathbf{s}}
\newcommand{\sss}{\mathbf{s}}
\newcommand{\0}{\mathbf{0}}
\newcommand{\comment}[1]{}
\newtheorem{proposition}{Proposition}
\definecolor{LightGray}{rgb}{0.9,0.9,0.9}
\definecolor{Red}{rgb}{0.9,0,0}
\crefname{section}{Sec.}{Secs.}
\Crefname{section}{Section}{Sections}
\Crefname{table}{Table}{Tables}
\crefname{table}{Tab.}{Tabs.}
\begin{document}

%%%%%%%%% TITLE - PLEASE UPDATE
%\title{Calibrating Deep Neural Networks by Constraining Logit Distance}

\title{The Devil is in the Margin: \\
Margin-based Label Smoothing for Network Calibration}

%\author{Bingyuan~Liu$^{1}$\thanks{Corresponding author: bingyuan.liu@etsmtl.ca}, Jose~Dolz$^{1}$, Adrian~Galdran$^{2}$, Riadh~Kobbi$^{3}$, Ismail~Ben~Ayed$^{1}$\\[3mm]
%$^{1}$ÉTS Montreal, Canada\hspace{0.5cm}  $^{2}$University of Bournemouth, UK  %\hspace{0.5cm} $^{3}$Diagnos Inc., Canada
%}

\author{
Bingyuan~Liu$^{1}$\thanks{Corresponding author: bingyuan.liu@etsmtl.ca}, Ismail~Ben~Ayed$^{1}$, Adrian~Galdran$^{2}$, Jose~Dolz$^{1}$\\[3mm]
$^{1}$ÉTS Montreal, Canada\hspace{0.5cm}  $^{2}$Universitat Pompeu Fabra, Barcelona, Spain  %\hspace{0.5cm} $^{3}$Diagnos Inc., Canada
}

\maketitle

%%%%%%%%% ABSTRACT
\begin{abstract}

In spite of the dominant performances of deep neural networks, recent works have shown that they are poorly calibrated, resulting in over-confident predictions. 
Miscalibration can be exacerbated by overfitting due to the minimization of the cross-entropy during training, as it promotes the predicted softmax probabilities to match the one-hot label assignments. This yields a pre-softmax activation of the correct class that is significantly larger than the remaining activations. Recent evidence from the literature suggests that loss functions that embed implicit or explicit maximization of the entropy of predictions yield state-of-the-art calibration performances.
%In this work, w
We provide a unifying constrained-optimization perspective of current state-of-the-art calibration losses. Specifically, these losses could be viewed as approximations of a linear penalty (or a Lagrangian term) imposing equality constraints on logit distances. This points to an important limitation of such underlying equality constraints, whose ensuing gradients constantly push towards a non-informative solution, which might prevent from reaching the best compromise between the discriminative performance and calibration of the model during gradient-based optimization.
Following our observations, we propose a simple and flexible generalization based on inequality constraints, which imposes a controllable margin on logit distances. Comprehensive experiments on a variety of image classification, semantic segmentation and NLP benchmarks demonstrate that our method sets novel state-of-the-art results on these tasks in terms of network calibration, without affecting the discriminative performance. The code is available at \url{https://github.com/by-liu/MbLS} .

\end{abstract}

%%%%%%%%% BODY TEXT
\section{Introduction}
\label{sec:intro}

With the advent of deep neural networks (DNNs), we have witnessed a dramatic performance improvement in a variety of computer vision and NLP tasks across different applications, such as image classification \cite{krizhevsky2012imagenet} or semantic segmentation \cite{chen2017rethinking}. Nevertheless, recent studies\cite{guo2017calibration, mukhoti2020calibrating} have shown that these high-capacity models are poorly calibrated, often resulting in over-confident predictions. As a result, the predicted probability values associated with each class overestimate the actual likelihood of %the labels from those classes being correct.
correctness.
%, suggesting that their real accuracy is likely to be lower than their predictive score. %Furthermore, %recent empirical evidence shows that this miscalibration is further exacerbated in the presence of domain shift \cite{ovadia2019can}, which can lead to potential catastrophic consequences in safety-related applications.

Quantifying the predictive uncertainty for modern DNNs has received an increased attention recently, with a variety of alternatives to better calibrate network outputs. A simple strategy consists in including a post-processing step during the test phase to transform the output of a trained network \cite{guo2017calibration, zhang2020mix, Tomani2021Posthoc, Ding2021LocalTemp}, with the parameters of this additional operation determined on a validation set. Despite their simplicity and low computational cost, these methods were shown to be effective when training and testing data are drawn from the same distribution. %\textcolor{red}{Nevertheless, under domain drift, post-hoc calibration performance largely degrades \cite{ovadia2019can}, resulting in unreliable predictions.} Furthermore, 
However, one of their observed limitations is that the choice of the transformation parameters, such as temperature scaling, is highly dependent on the dataset and network.
%these methods resort to the validation set to find empirically the optimal scaling %temperature
%parameter, which might be unrealistic in many scenarios. 
A more principled alternative is to explicitly maximize the Shannon entropy of the predictions during training 
by integrating a term into the learning objective, which penalizes confident output distributions \cite{pereyra2017regularizing}. 
Furthermore, recent efforts to quantify the quality of predictive uncertainties have focused on investigating the effect of the entropy on the training labels \cite{xie2016disturblabel,muller2019does,mukhoti2020calibrating}. Findings from these works evidence that, popular losses, which modify the hard-label assignments, such as label smoothing \cite{szegedy2016rethinking} and focal loss \cite{lin2017focal}, implicitly integrate an entropy maximization objective and have a favourable effect on model calibration. As shown comprehensively in the recent study in \cite{mukhoti2020calibrating}, these losses, with implicit or explicit maximization of the entropy, represent the state-of-the-art in model calibration.

\vspace{0.5cm}
\noindent \textbf{Contributions} are summarized as follows:

\begin{itemize}
    \item We provide a unifying constrained-optimization perspective of current state-of-the-art calibration losses. Specifically, these losses could be viewed as approximations of a linear penalty (or a Lagrangian term) imposing equality constraints on logit distances. This points to an important limitation of such underlying hard equality constraints, whose ensuing gradients constantly push towards a non-informative solution, which might prevent from reaching the best compromise between the discriminative performance and calibration of the model during gradient-based optimization.
    
    \item Following our observations, we propose a simple and flexible generalization based on inequality constraints, which imposes a controllable margin on logit distances.
    
    \item We provide comprehensive experiments and ablation studies over two standard image classification benchmarks (CIFAR-10 and Tiny-ImageNet), one fine-grained image classification dataset (CUB-200-2011), one semantic segmentation dataset (PASCAL VOC 2012) and one NLP dataset (20 Newsgroups), with various network architectures. Our empirical results demonstrate the superiority of our method compared to state-of-the-art calibration losses. Our findings suggest that, for complex datasets, such as fine-grained image classification, our margin-based method yields substantial improvements in term of calibration.

\end{itemize}
\section{Related work}
\label{sec:related}

%%% Post-processing
\noindent \textbf{Post-processing approaches.} A straightforward yet efficient strategy to mitigate mis-calibrated predictions is to include a post-processing step, which transforms the probability predictions of a deep network \cite{guo2017calibration, zhang2020mix, Tomani2021Posthoc, Ding2021LocalTemp}. Among these methods, \textit{temperature scaling}\cite{guo2017calibration}, a variant of Platt scaling \cite{platt1999probabilistic}, employs a single scalar parameter over all the pre-softmax activations, which results in softened class predictions. %While simple, this technique has demonstrated to be %has demonstrated to be effective. In particular, it employs a single scalar parameter $T>0$ over all the logits(pre-softmax activations), which softens the predicted class probabilities.
Despite its good performance on in-domain samples, \cite{ovadia2019can} demonstrated that temperature scaling does not work well under data distributional shift. \cite{Tomani2021Posthoc} mitigated this limitation by transforming the validation set before performing the post-hoc calibration step. In \cite{Ma2021postrank}, a ranking model was introduced to improve the post-processing model calibration, whereas \cite{Ding2021LocalTemp} used a simple regression model to predict the temperature parameter during the inference phase. %\textcolor{red}{Nevertheless, a main limitation of these approaches is that the parameters of the post-processing step must be determined on the validation set, which is unrealistic in practice, specially with domain shifts.}

%%% There is also Bayesian-based (probabilistic) methods and ensembles (non probabilistic)

\noindent \textbf{Probabilistic and non-probabilistic methods.} Several probabilistic and non-probabilistic approaches have been also investigated to measure the uncertainty of the predictions in deep neural networks. For example, Bayesian neural networks have been used to approximate inference by learning a posterior distribution over the network parameters, as obtaining the exact Bayesian inference is computationally intractable in deep networks. These Bayesian-based models include variational inference \cite{blundell2015weight,louizos2016structured}, stochastic expectation propagation \cite{hernandez2015probabilistic} or dropout variational inference \cite{gal2016dropout}. Ensemble learning is a popular non-parametric alternative, where the empirical variance of the network predictions is used as an approximate measure of uncertainty. This yields improved discriminative performance, as well as meaningful predictive uncertainty with reduced miscalibration. %the output of multiple models is combined leading to meaningful predictive uncertainty with reduced miscalibration. %and improved discriminative performance. 
Common strategies to generate ensembles include differences in model hyperparameters \cite{wenzel2020hyperparameter}, random initialization of the network parameters and random shuffling of the data points \cite{lakshminarayanan2016simple}, Monte-Carlo Dropout \cite{gal2016dropout,zhang2019confidence}, dataset shift \cite{ovadia2019can} or model orthogonality constraints \cite{larrazabal2021orthogonal}. However, a main drawback of this strategy stems from its high computational cost, particularly for complex models and large datasets.

\noindent \textbf{Explicit and implicit penalties.} Modern classification networks trained under the fully supervised learning paradigm resort to training labels provided as binary one-hot encoded vectors. Therefore, all the probability mass is assigned to a single class, resulting in minimum-entropy supervisory signals (i.e., entropy equal to zero). As the network is trained to follow this distribution, we are implicitly forcing it to be overconfident (i.e., to achieve a minimum entropy), thereby penalizing uncertainty in the predictions. While temperature scaling artificially increases the entropy of the predictions, \cite{pereyra2017regularizing} included into the learning objective a term to penalize confident output distributions by explicitly maximizing the entropy. In contrast to tackling overconfidence directly on the predicted probability distributions, recent works have investigated the effect of the entropy on the training labels. The authors of \cite{xie2016disturblabel} explored adding label noise as a regularization, where the disturbed label vector was generated by following a generalized Bernoulli distribution. Label smoothing \cite{szegedy2016rethinking}, which successfully improves the accuracy of deep learning models, has been shown to implicitly calibrate the learned models, as it prevents the network from assigning the full probability mass to a single class, while maintaining a reasonable distance between the logits of the ground-truth class and the other classes \cite{pereyra2017regularizing,muller2019does}. %While label smoothing softens the one-hot labels with a uniform distribution, we can also employ a teacher network to learn the soft assignment \cite{hinton2015distilling}. This family of methods, known as distillation approaches, regularize a network by adding learned information about the ratios between incorrect classes.  \textcolor{red}{Focal loss..... and then mix-up based methods.. I think with these we are already good...}
More recently, \cite{mukhoti2020calibrating} demonstrated that focal loss \cite{lin2017focal} implicitly minimizes a Kullback-Leibler (KL) divergence between the uniform distribution and the softmax predictions, thereby increasing the entropy of the predictions. Indeed, as shown in \cite{muller2019does,mukhoti2020calibrating}, both label smoothing and focal loss implicitly regularize the network output probabilities, encouraging their distribution to be close to the uniform distribution. To our knowledge, and as demonstrated experimentally in the recent studies in \cite{muller2019does,mukhoti2020calibrating}, loss functions that embed implicit or explicit maximization of the entropy of the predictions yield state-of-the-art calibration performances.

\section{Preliminaries}
\label{sec:form}

%This work is formulated in the general classification, following the optimization of a loss function for training a deep network.
%Let us denote the training dataset as $\mathcal{D}(\mathcal{X}, \mathcal{Y})=\{(\xx^{(i)}, \yy^{(i)})\}_{i=1}^N$, where $\xx^{(i)} \in \mathcal{X}$ represents the $i^{th}$ image and $\yy \in \mathcal{Y} \subset \mathbb{R}^K$ is the corresponding ground-truth label with $K$ classes. To simply the notations, the network parameters and sample indices would be omitted in the following prediction quantities and loss functions, as this does not lead to ambiguity.

%Then given a raw sample as input, the DNN yields the logit vector, denoted as $\l \in \mathbb{R}^K $ , with a soft-max layer on top to further get the probability prediction for each class, \textit{i.e.}, $s_{k} = \mathbb{S} (k| \s)$ .
%The predicted class is computed as $\hat{y} = \argmax_{y \in \mathcal{Y}} s_k$ and the predicted confidence as $\hat{p} = \max_{k} s_{k}$.

Let us denote the training dataset as $\mathcal{D}(\mathcal{X}, \mathcal{Y})=\{(\xx^{(i)}, \yy^{(i)})\}_{i=1}^N$, where $\xx^{(i)} \in \mathcal{X} \subset \mathbb{R}^{\Omega_i}$ represents the $i^{th}$ image, $\Omega_i$ the spatial image domain, and $\yy \in \mathcal{Y} \subset \mathbb{R}^K$ its corresponding ground-truth label with $K$ classes, provided as one-hot encoding. %To simplify the notations, the network parameters and sample indices would be omitted in the following sections, as this does not lead to ambiguity. 
Given an input image $\xx^{(i)}$, a neural network parameterized by $\theta$ generates a logit vector, defined as $f_{\theta}(\xx^{(i)})=\mathbf{l}^{(i)} \in \mathbb{R}^K $. To simplify the notations, we omit sample indices, as this does not lead to ambiguity, and just use $\mathbf{l} = (l_k)_{1 \leq k \leq K} \in \mathbb{R}^K$ to denote logit vectors. Note that the logits are the inputs of the softmax probability predictions of the network, which are computed
as:
\[\sss = (s_k)_{1 \leq k \leq K} \in \mathbb{R}^K; \quad s_{k} = \frac{\exp^{l_k}}{\sum_j^K \exp^{l_j}}\]
%Thus, the probability that the neural network $f_{\theta}$ predicts a given class $k$ given an input $\xx$ can be defined as $s_{k}=f_{\theta}(k|\xx)$. The predicted class is therefore computed as $\hat{y} = \argmax_{y \in \mathcal{Y}} s_k$ and the predicted confidence as $\hat{p} = \max_{y \in \mathcal{Y}} s_{k}$.
The predicted class is computed as $\hat{y} = \argmax_k s_k$, whereas the predicted confidence is given by $\hat{p} = \max_{k} s_k$.

\comment{
\begin{table*}
    \scriptsize
    % \vspace{-0.15in}
    \caption{
        \textbf{Notations and formulations used in this paper.} Note that the network parameters $\theta$ and sample indices $i$ are omitted in the prediction quantities and loss functions, so as to simplify notations, and this does not lead to ambiguity.
    }
    \label{table:notations}
    \centering
    \resizebox{1.0\textwidth}{!}{
    \renewcommand{\arraystretch}{1.2}
    \begin{tabular}{@{}ll@{}}
    \multicolumn{2}{c}{Dataset} \\
    \toprule
    Concept & Formula \\
    \midrule
    % \toprule
    Labeled dataset & $\mathcal{D}(\mathcal{X}, \mathcal{Y})=\{(\xx^{(i)}, \yy^{(i)})\}_{i=1}^N$ \\
    Indices/number of classes & $1 \leq k \leq K$ \\
    Input and label space & $\xx \in \mathcal{X},\  \yy \in \mathcal{Y} \subset \mathbb{R}^K $ \\
    Label & $\yy=(y_k \in \{0, 1\})_{1 \leq k \leq K}$ \\
    Label smoothing & $\yy^{LS} = (1-\alpha)\yy + \alpha / K,\  0 < \alpha < 1 $ \\
    \bottomrule
    \end{tabular}
    \qquad
    \begin{tabular}{@{}ll@{}}
    \multicolumn{2}{c}{Modeling} \\
    \toprule
    Concept & Formula \\
    \midrule
    Model parameters & $\theta$ 
    \\
    % \midrule
    Logit / pre-softmax activation /  & $\l^{\theta} \in \mathbb{R}^K $ \\
    % Logit distance & $\dd = (\|\s\|_{\infty}-s_k)_{1 \leq k \leq K}$ \\
    %   \midrule
    Softmax prediction & $s_{k} = \mathbb{S} (k| \s)$
    \\
    Predicted confidence & $ \hat{p} = \max_{k} s_{k} $ \\
    Predicted label & $\hat{\yy} = (\hat{y}_{k} \in \{0,1\})_{1 \leq k \leq K}$ \\
    Model accuracy & $ \mathbb{P}(\hat{\yy} = \yy | \hat{p}) $ \\
    \bottomrule \\
    \end{tabular}
    }
    % \resizebox{1.0\textwidth}{!}{
    % \renewcommand{\arraystretch}{1.0}
    % \begin{tabular}{@{}ll@{}}
    % \multicolumn{2}{c}{\rule{0pt}{4ex}Losses,  information measures and regularizers} \\
    % \toprule
    % Concept & Formula \\
    % \midrule
    % Cross entropy & $ {\cal L}_\text{CE} = -\sum_{k} y_{k} \log p_{k}$ \\
    % Focal loss & $ {\cal L}_\text{FL} = -\sum_{k} (1 - p_{k})^{\gamma} y_{k} \log p_{k}$ \\
    % % \midrule
    % KL divergence between label and predicted distribution & $ {\cal D}_\text{KL}(\yy || \pp) =  \sum_{k} y_{k} \log(\frac{y_{k}}{p_{k}})$  \\
    % Entropy of predicted distribution & $\mathcal{H}(\pp) = \sum_k p_k \log(p_k)$ \\
    % Log-barrier extension & 
    %  $ \widetilde{\psi}_{t}(z) =\begin{cases}
    %       -\frac{1}{t} \log (-z) & \text{if } z \leq -\frac{1}{t^2} \\
    %       tz - \frac{1}{t} \log (\frac{1}{t^2}) + \frac{1}{t} & \text{otherwise} \\
    %  \end{cases} $ \\ 
    % \bottomrule
    % \end{tabular}
    % }
    % \vspace{-0.15in}
\end{table*}
}

% \paragraph{Calibration :}
\noindent \textbf{Calibrated models.} %We would like the predicted confidence of the model to be calibrated, which intuitively means that it represents a true probability. 
\textit{Perfectly calibrated} models are those for which the predicted confidence for each sample is equal to the model accuracy : $\hat{p} = \mathbb{P}(\hat{y} = y | \hat{p})$, where $y$ denotes the true labels.
Therefore, an \textit{over-confident model} tends to yield predicted confidences that are larger than its accuracy, whereas an \textit{under-confident model} displays lower confidence than the model's accuracy.

%Therefore, an \textit{over-confident model} represents that its predicted confidence tends to be larger than the model accuracy, while an \textit{under-confident model} means relatively smaller confidence than the ideal value.

% The primary metric used to measure model calibration is the \textit{expected calibration error} (ECE) \cite{naeini2015ece}, which is the expected absolute difference between the predicted confidence and accuracy of a particular model : $ \mathbb{E}_{\pp} [|\mathbb{P}(\hat{\yy} = \yy | \hat{p}) - \hat{p} |]$.
% We also use AdaECE and Classwise-ECE in our evaluation.
% Please refer to the supplement material for detailed presentation of the three metrics.

%\noindent \textbf{Mis-calibration of DNNs:} The mis-calibration of high-capacity DNNs, in particular the over-confidence, is primarily due to the over-fitting on the conventional cross-entropy (CE) loss : 

%\begin{align}\label{eq:ce}
%    {\cal L}_\text{CE} = -\sum_{k} y_{k} \log s_{k}
%\end{align}

%It is noted that the minimization of CE is achieved when the correct node of softmax prediction is equal to $1$ with the rest as $0$ for every training sample. Therefore, the mini-batch optimization pushes the prediction to this simplex irrespective of the classification accuracy.

\noindent \textbf{Miscalibration of DNNs.} The cross-entropy (CE) loss is the standard
training objective for fully supervised discriminative deep models. CE reaches its minimum when the predictions for all the training samples match the hard (binary) ground-truth labels, i.e., $s_k = 1$ when $k$ is the ground-truth class of the sample and $s_k = 0$ otherwise. Minimizing the CE implicitly pushes softmax vectors $\sss$ towards the vertices of the simplex, thereby magnifying the distances between the largest logit $\max_k(l_k)$ and the rest of the logits,   
%For this to happen, $|| \mathbf{W}_{\theta}|| \rightarrow \infty$ (the norms of the weights), which results in high logit values, and thus
yielding over-confident predictions and miscalibrated models.  
%In other words, CE implicitly induces a weight magnification effect in neural networks, leading to higher logit distances and, therefore, miscalibrated models.

%\noindent \textbf{Definition of logit distances:} Let us now define the vector of logit distances between the winner class and the rest as:

%\begin{align}\label{eq:logit distance}
%    \dd(\llll) = (\max_i (l_i) - l_k)_{k \neq \argmax(\llll)} \in \mathbb{R}^{K-1} 
%\end{align}

%where $\max_i(l_i)$ presents the largest value of the logit vector and $k$ is not equal to the index corresponding to the winner node.
%In the following section, we also simply this notation as $d_k = \max (\llll) - l_k$.
%Noted that each element in $\dd(\llll)$ is non-negative.
%According to the definition of softmax function : $s_{k} = \frac{e^{l_k}}{\sum_k e^{l^k}}$, it is noted that the goal of CE is possible only when the logit distance $\dd(\llll)$ is large enough.
%In other words, CE pushes logit distance to be infinite, \textit{i.e.}, $\dd(\llll) \rightarrow \infty$.

\section{A constrained-optimization perspective of calibration}
\label{sec:view}

In this section, we present a novel constrained-optimization perspective of current calibration methods for deep networks, showing that the existing 
strategies, including Label Smoothing (LS) \cite{muller2019does,szegedy2016rethinking}, Focal Loss (FL) \cite{mukhoti2020calibrating,lin2017focal} and Explicit Confidence Penalty (ECP) \cite{pereyra2017regularizing}, impose {\em equality} constraints on logit distances. Specifically, they embed either explicit or implicit penalty functions, which push all the logit distances to zero.

\subsection{Definition of logit distances}
Let us first define the vector of logit distances between the winner class and the rest as:
\begin{align}\label{eq:logit distance}
    \dd (\llll) = (\max_j (l_j) - l_k)_{1 \leq k \leq K} \in \mathbb{R}^{K} 
\end{align}
Note that each element in $\dd(\llll)$ is non-negative.
In the following, we show that LS, FL and ECP correspond to different {\em soft penalty} functions for imposing the same hard equality constraint $\dd (\llll) = \mathbf 0$ or, equivalently, imposing inequality 
constraint $\dd (\llll) \leq \mathbf 0$ (as $\dd (\llll)$ is non-negative by definition). 
Clearly, enforcing this equality constraint in a hard manner would result in all $K$ logits being equal for a given sample, which corresponds to non-informative softmax predictions $s_k = \frac{1}{K} \, \forall k$.  

\subsection{Penalty functions in constrained optimization}

In the general context of constrained optimization\cite{Bertsekas95}, {\em soft} penalty functions are widely used to tackle {\em hard} equality or inequality constraints. For the discussion in the sequel, consider specifically the following hard equality constraint:
\begin{equation}
\label{equality-constraint-label-smoothing}
\dd (\llll) = {\mathbf 0} 
\end{equation}
The general principle of a soft-penalty optimizer is to replace a hard constraint of the form in Eq.~\ref{equality-constraint-label-smoothing} by adding an additional term $\mathcal{P}(\dd (\llll))$ into the main objective function to be minimized. Soft penalty $\mathcal{P}$ should be a continuous and differentiable function, which reaches its global minimum when the constraint is satisfied, i.e., it verifies: $\mathcal{P}(\dd (\llll)) \geq \mathcal{P}(\mathbf {0}) \, \forall \, \llll \in \mathbb{R}^{K}$.
Thus, when the constraint is violated, i.e., when $\dd (\llll)$ deviates from $\mathbf {0}$, the penalty term $\mathcal{P}$ increases.

\noindent \textbf{Label smoothing.} In addition to improving the discriminative performance of deep neural networks, recent evidence \cite{lukasik2020does,muller2019does} suggests that Label Smoothing (LS) \cite{szegedy2016rethinking} positively impacts model calibration. In particular, LS modifies the hard target labels with a smoothing parameter $\alpha$, so that the original one-hot training labels $\yy \in \{0, 1\}^K$ become $\yy^\text{LS} = (y_{k}^\text{LS})_{1 \leq k \leq K}$, with $y_{k}^\text{LS}=y_{k}(1-\alpha)+\frac{\alpha}{K}$. Then, we simply minimize the cross-entropy between the modified labels and the network outputs:
\begin{align}
\label{eq:ls}
    {\cal L}_\text{LS} = -\sum_{k} y_{k}^\text{LS} \log s_{k} = -\sum_{k} ((1-\alpha)y_k + \frac{\alpha}{K}) \log s_{k}
\end{align}
where $\alpha \in [0,1]$ is the smoothing hyper-parameter.
It is straightforward to verify that cross-entropy with label smoothing in Eq.~\ref{eq:ls} can be decomposed into a 
standard cross-entropy term augmented with a Kullback-Leibler (KL) divergence between uniform distribution ${\mathbf u} = \frac{1}{K}$ and the softmax prediction:
\begin{align}
\label{eq:ls-kl}
{\cal L}_\text{LS} \ceq {\cal L}_\text{CE} + \frac{\alpha}{1-\alpha}{\cal D}_\text{KL}\left({\mathbf u} || \s \right )
\end{align}
where $\ceq$ stands for equality up to additive and/or non-negative multiplicative constants. Now, consider the following bounding relationships between a linear penalty (or a Lagrangian) for equality constraint $\dd (\llll) = {\mathbf 0}$ and the KL divergence in Eq.~\ref{eq:ls-kl}.  
\begin{proposition}
\label{prop:ls}
A linear penalty (or a Lagrangian term) for constraint $\dd (\llll) = {\mathbf 0}$ is bounded from above and below
by ${\cal D}_\text{KL}\left({\mathbf u} || \s \right )$, up to additive constants:
\begin{align}
{\cal D}_\text{KL}\left({\mathbf u} || \s \right ) - \log(K) \cleq \frac{1}{K}\sum_k (\max_j (l_j) - l_k) \cleq  {\cal D}_\text{KL}\left({\mathbf u} || \s \right ) \nonumber
\end{align}
where $\cleq$ stands for inequality up to an additive constant.
\end{proposition}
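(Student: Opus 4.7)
The plan is to rewrite the KL divergence between the uniform distribution and the softmax output in a closed form that exposes its connection to the mean logit distance, and then apply the standard log-sum-exp sandwich bounds to obtain both inequalities. First, I would substitute $u_k = 1/K$ and $s_k = e^{l_k}/\sum_j e^{l_j}$ directly into the definition of the KL divergence; after cancelling the $-\log K$ coming from $\log u_k$ and expanding $\log s_k = l_k - \log \sum_j e^{l_j}$, this gives the closed form
\begin{equation*}
{\cal D}_\text{KL}\left({\mathbf u} \,||\, \s \right) \;=\; -\log K \;-\; \tfrac{1}{K}\sum_k l_k \;+\; \log \sum_j e^{l_j},
\end{equation*}
which already makes explicit that, up to an additive constant, the KL-to-uniform term is the gap between the log-sum-exp of the logits and their arithmetic mean.

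Second, I would invoke the elementary bounds
\begin{equation*}
\max_j l_j \;\leq\; \log \sum_j e^{l_j} \;\leq\; \max_j l_j + \log K,
\end{equation*}
which follow by factoring $e^{\max_j l_j}$ out of the sum and noting that the remaining $\sum_j e^{l_j - \max_i l_i}$ lies in $[1,K]$. Plugging these into the closed-form expression for the KL, and using the identity $\frac{1}{K}\sum_k(\max_j l_j - l_k) = \max_j l_j - \frac{1}{K}\sum_k l_k$, I get the sandwich
\begin{equation*}
\tfrac{1}{K}\sum_k(\max_j l_j - l_k) - \log K \;\leq\; {\cal D}_\text{KL}\left({\mathbf u}\,||\,\s \right) \;\leq\; \tfrac{1}{K}\sum_k(\max_j l_j - l_k).
\end{equation*}
A trivial rearrangement, with the $\log K$ absorbed by the additive-constant tolerance of $\cleq$, then yields the two inequalities of the proposition.

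I do not expect any genuine obstacle: the only nontrivial ingredient is the log-sum-exp sandwich, which is a one-line observation. Conceptually, the content of the proof is simply that $\log \sum_j e^{l_j}$ is the soft analogue of $\max_j l_j$, so the KL divergence to the uniform distribution (used implicitly by label smoothing via Eq.~\ref{eq:ls-kl}) is, up to a bounded additive slack, a smoothed surrogate of the linear Lagrangian penalty $\frac{1}{K}\sum_k(\max_j l_j - l_k)$ associated with the equality constraint $\dd(\llll) = \mathbf{0}$. This is exactly the equivalence the proposition is meant to formalize.
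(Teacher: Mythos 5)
Your proof is correct and follows essentially the same route as the paper's: expand ${\cal D}_\text{KL}({\mathbf u}\,||\,\s)$ via the softmax to expose the $\log\sum_j e^{l_j}$ term, then apply the LogSumExp sandwich $\max_j l_j \leq \log\sum_j e^{l_j} \leq \max_j l_j + \log K$. The only cosmetic difference is that you carry the exact $-\log K$ constant explicitly, whereas the paper absorbs it into the $\ceq$/$\cleq$ notation from the start.
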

These bounding relationships could be obtained directly from the softmax and ${\cal D}_\text{KL}$
expressions, along with the following well-known property of the LogSumExp function: $\max_k(l_k) \leq \log{\sum_k^K e^{l_k}} \leq \max_k(l_k) + \log(K)$.
For the details of the proof, please refer to Appendix~\ref{sec:ap:proof}.

Prop. \ref{prop:ls} means that LS is (approximately) optimizing a linear penalty (or a Lagrangian) for logit-distance constraint $\dd(\llll) = \0$, which encourages equality of all logits; see the illustration in Figure~\ref{fig:method}, top-left.  

\noindent \textbf{Focal loss.} Another popular alternative for calibration is focal loss (FL) \cite{lin2017focal}, which attempts to alleviate the over-fitting issue in CE by directing the training attention towards samples with low confidence in each mini-batch. More concretely, the authors 
proposed to use a modulating factor to the CE, $(1-s_k)^{\gamma}$, which controls the trade-off between easy and hard examples. Very recently, \cite{mukhoti2020calibrating} demonstrated that focal loss is, in fact, an upper bound on CE augmented with a term that implicitly serves as a maximum-entropy regularizer:
\begin{align}
{\cal L}_\text{FL} = -\sum_{k} (1 - s_{k})^{\gamma} y_{k} \log s_{k} \geq \mathcal{L}_{\text{CE}} - \gamma\mathcal{H}(\mathbf{s})
\label{eq:fl}
\end{align}
where $\gamma$ is a hyper-parameter and $\mathcal{H}$ denotes the Shannon entropy of the softmax prediction, given by
\[\mathcal{H}(\mathbf{s}) = -\sum_k s_k \log(s_k)\]
In this connection, FL is closely related to ECP \cite{pereyra2017regularizing}, which explicitly added the negative entropy term, $-\mathcal{H}(\mathbf{s})$, to the training objective. It is worth noting that minimizing the negative entropy of the prediction is equivalent to minimizing the KL divergence between the prediction and the uniform distribution, up to an additive constant, i.e., 
\[-\mathcal{H}(\mathbf{s}) \ceq {\cal D}_\text{KL}(\s || \mathbf{u})\] 
which is a reversed form of the KL term in Eq.~\ref{eq:ls-kl}. 

Therefore, all in all, and following Prop.~\ref{prop:ls} and the discussions above, %both
LS, FL and ECP could be viewed as different penalty functions for imposing the same logit-distance equality constraint $\dd(\llll) = \0$. This motivates our margin-based generalization of logit-distance constraints, which we introduce in the following section, along with discussions of its desirable properties (e.g., gradient dynamics) for calibrating neural networks. 

\begin{figure}[h]
    \centering
    \includegraphics[width=1\columnwidth]{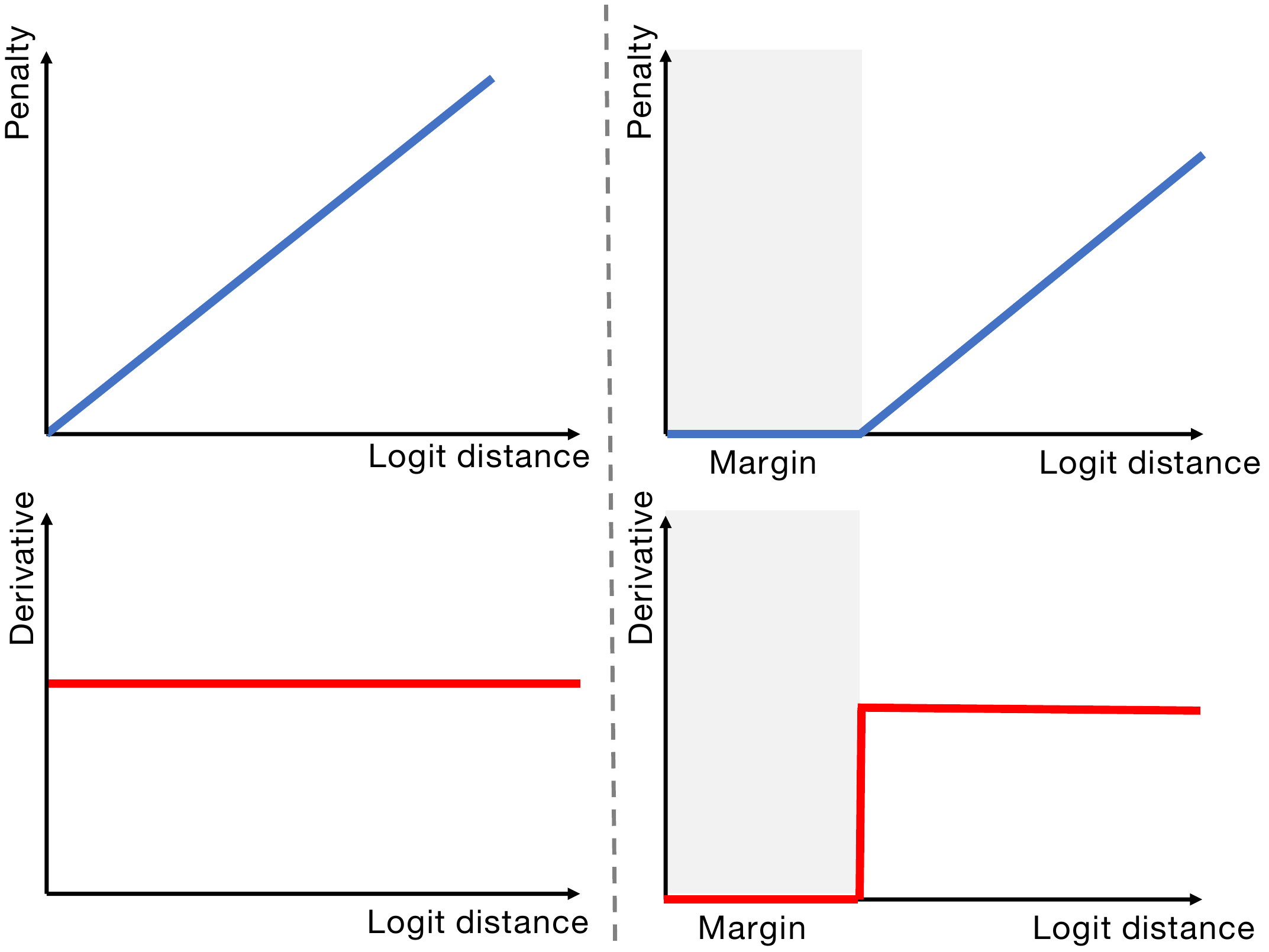}
    \caption{Illustration of the linear (left) and margin-based (right) penalties for imposing 
    logit-distance constraints, along with the corresponding derivatives.}
    \label{fig:method}
    \vspace{-2mm}
\end{figure}

\subsection{Margin-based Label Smoothing (MbLS)}
\label{sec:our}

Our previous analysis shows that LS, FL and ECP are closely related from a constrained-optimization perspective, and they could be seen as approximations of a linear penalty for imposing constraint $\dd (\llll) = {\mathbf 0}$, pushing all logit distances to zero; see Figure~\ref{fig:method}, top-left. Clearly, enforcing this constraint in a hard way yields a non-informative solution where all the classes have exactly the same logit and, hence, the same class prediction: $s_k = \frac{1}{K}\, \forall K$. While this trivial solution is not reached in practice when using soft penalties (as in LS, FL and ECP) jointly with CE, we argue that the underlying equality constraint $\dd (\llll) = {\mathbf 0}$ has an important limitation, which might prevent from reaching the best compromise between the discriminative performance and calibration of the model during gradient-based optimization. Figure \ref{fig:method}, left, illustrates this: With the linear penalty for constraint $\dd (\llll) = {\mathbf 0}$ in the top-left of the Figure, the 
derivative with respect to logit distances is a strictly positive constant (left-bottom), yielding during training {\em a gradient term that constantly pushes towards the trivial, non-informative solution} $\dd (\llll) = {\mathbf 0}$ (or equivalently $s_k = \frac{1}{K}\, \forall K$). To alleviate this issue, we propose to replace the
equality constraint $\dd (\llll) = {\mathbf 0}$ with the more general inequality constraint $\dd (\llll) \leq {\mathbf m} $, where ${\mathbf m}$ denotes the $K$-dimensional vector with all elements equal to $m > 0$.
Therefore, we include a margin $m$ into the penalty, so that the logit distances in $\dd(\llll)$ are allowed to 
be below $m$ when optimizing the main learning objective:
\begin{align}
\label{eq:our-constraint}
    \min \quad   \mathcal{L}_{\text{CE}} \quad \text{s.t.} \quad  \dd(\llll) \leq \textbf{m} , \quad \textbf{m} > \textbf{0}
\end{align}
The intuition behind adding a strictly positive margin $m$ is that, unlike the linear penalty for constraint $\dd (\llll) = {\mathbf 0}$ (Figure \ref{fig:method}, left), the gradient is back-propagated only on those logits where the distance is above the margin (Figure \ref{fig:method}, right). This contrasts with the 
linear penalty, for which there exists always a gradient, and its value is the same across all the logits, regardless of their distance.

Even though the constrained problem in Eq. \ref{eq:our-constraint} could be solved by a Lagrangian-multiplier algorithm, we resort to a simpler unconstrained approximation by ReLU function:
\begin{align}
\label{eq:our-l1}
    \min \quad &  \mathcal{L}_{\text{CE}} + \lambda \sum_k \max(0, \max_j (l_j) - l_k - m)
\end{align}
Here, the non-linear ReLU penalty for inequality constraint $\dd (\llll) \leq {\mathbf m}$ discourages logit distances from surpassing a given margin $m$, and $\lambda$ is a trade-off weight balancing the two terms.
%Clearly, 
It is clear that, as discussed in Sec.~\ref{sec:view}, several competitive calibration methods could be viewed as approximations for imposing constraint $\dd (\llll) = {\mathbf 0}$ and, therefore, correspond to the special case of our method when setting the margin to $m=0$. Our comprehensive experiments in the next section demonstrate clearly the benefits of introducing a strictly positive margin $m$.

Note that our model in Eq.~\ref{eq:our-l1} has two hyper-parameters, $m$ and $\lambda$. We fixed $\lambda$ to $0.1$ in all our experiments on a variety of problems and benchmarks, and tuned only the margin $m$ over validation sets. In this way, when comparing with the existing calibration solutions, we use the same budget of hyper-parameter optimization ($m$ in our method vs. $\alpha$ in LS or $\gamma$ in FL).

%Obviously the implicit constraint of $m=0$ in previous methods is not ideal.
%Our solution tackles this issue by providing a way to introduce a reasonable  and controllable margin.
%By this way, we are able to push the over-confident DNNs toward perfect calibrated models with less risk of decreasing the classification performance.
%Meanwhile, the ${\cal L}_1$ penalty we used is able to provide a more stable gradient property at the vicinity than the logarithmic function hinted in LS (Eq. \ref{eq:ls-kl}) and the polynomial function in FL (Eq. \ref{eq:fl}).

 \begin{table*}[h!]
  \caption{Calibration performance for different approaches on two popular image classification benchmarks. %Values of \textit{ECE} (\%) and \textit{AECE} (\%) for each method are reported. 
  Two models are used on each dataset : ResNet-50 (R-50) and ResNet-101 (R-101).
  Best method is highlighted in bold, whereas the second best method is underlined. %The fine-grained image classification dataset is shadowed in the table.%
  }
  \vspace{-2mm}
  \label{table:big}
  \centering
  %\small \vspace{1em}
   \footnotesize
  \resizebox{0.9\textwidth}{!}
  {
  \setlength{\tabcolsep}{2.0pt}
  \begin{tabular}{@{}llllccccccccccccccccccccc@{}}
    \toprule
    %\multicolumn{2}{c}{Part}                   \\
    % \cmidrule(r){1-2}
    \multirow{2}{*}{\textbf{Dataset}}  && \multirow{2}{*}{\textbf{Model}}  && \multicolumn{2}{c}{\textbf{CE}} &&
    \multicolumn{2}{c}{\textbf{ECP}\cite{pereyra2017regularizing}} &&
			\multicolumn{2}{c}{\textbf{LS} \cite{szegedy2016rethinking}} && \multicolumn{2}{c}{\textbf{FL} \cite{lin2017focal}} &&
			\multicolumn{2}{c}{\textbf{FLSD}\cite{mukhoti2020calibrating}} && 
			\multicolumn{2}{c}{\textbf{Ours (m=0)}} && \multicolumn{2}{c}{\textbf{Ours}} \\
    \cmidrule{5-6} \cmidrule{8-9} \cmidrule{11-12} 
    \cmidrule{14-15} \cmidrule{17-18} \cmidrule{20-21} \cmidrule{23-24}
    &&  && ECE & AECE && ECE & AECE && ECE & AECE && ECE & AECE && ECE & AECE && ECE & AECE && ECE & AECE \\
    \midrule
    \multirow{2}{*}{Tiny-ImageNet} && R-50 && 3.73 & 3.69  && 4.00 & 3.92 && 3.17 & 3.16 && 2.96 & 3.12 && 2.91 & 2.95 && \underline{2.50} & \underline{2.58} && \textbf{1.64} & \textbf{1.73} \\
    && R-101 && 4.97 & 4.97 && 4.68 & 4.66 && 2.20 & 2.21 && 2.55 & 2.44 && 4.91 & 4.91 && \underline{1.89} & \underline{1.95} && \textbf{1.62} & \textbf{1.68} \\
    \midrule
    \multirow{2}{*}{CIFAR-10} && R-50 && 5.85 & 5.84 && 3.01 & \textbf{2.99} && \underline{2.79} & 3.85 && 3.90 & 3.86 && 3.84 & 3.60 && 3.72 & 4.29 && \textbf{1.16} & \underline{3.18} \\
    && R-101 && 5.74 & 5.73 && 5.41 & 5.40 && 3.56 & 4.68 && 4.60 & 4.58 && 4.58 & 4.57 && \underline{3.07} & \underline{3.97} && \textbf{1.38} & \textbf{3.25}  \\
    % \midrule
    % \rowcolor{LightGray}
    %  && R-50 && 6.67 & 6.43 && 6.32 & 6.30 &&  \underline{3.63} & 3.79 && 4.03 & 4.09 && 3.68 & \underline{3.73} && 7.388 & 7.389 && \textbf{2.68}  & \textbf{2.67} \\
    % \rowcolor{LightGray}
    % \multirow{-2}{*}{CUB-200} && R-101 && 6.75 & 6.65 && 5.55 & 5.44 && 5.16 & \underline{5.14} && 8.41 & 8.39 && 8.54 & 8.53 && \underline{5.11} & 5.29 && \textbf{2.78} & \textbf{2.63} \\
    \bottomrule
  \end{tabular}
  }
  \vspace{-3mm}
\end{table*}

\section{Experiments}
\label{sec:exp}

\noindent \textbf{Datasets.} Our method is validated on a variety of popular image classification benchmarks, including two standard datasets, \textbf{CIFAR-10} \cite{krizhevsky2009learning}
% CIFAR-100
and \textbf{Tiny-ImageNet} \cite{deng2009imagenet}, and one fine-grained dataset, \textbf{CUB-200-2011} \cite{WahCUB_200_2011}. A main difference between these tasks is that fine-grained visual categorization focuses on differentiating between \textit{hard-to-distinguish} object classes, typically from subcategories, such as species of birds or flowers, whereas conventional datasets contain more general categories, i.e., \textit{is this a dog or a car?} To show the general applicability of our method, we also evaluate it on one well-known segmentation benchmark, \textbf{PASCAL VOC 2012} \cite{VOC2015}. %For classification tasks, state-of-the-art architectures such as ResNet \cite{he2016deep}, DenseNet \cite{huang2017densely} and WideReseNet \cite{zagoruyko2016wide} are employed, whereas DeepLabV3 \cite{chen2017rethinking} is used for semantic segmentation. 
Last, we conduct experiments on the \textbf{20 Newsgroups} dataset \cite{lang1995newsweeder}, a popular Natural Language Processing (NLP) benchmark for text classification. 
Please refer to Appendix~\ref{sec:ap:data} for a detailed description of each dataset.
%and experimental settings.
%For this task, we train a Global Pooling CNN (GPool-CNN) \cite{lin2013network} architecture following \cite{mukhoti2020calibrating}. For a fair comparison, we employ the same settings across all the benchmarks and models. Please refer to the supplementary material for a detailed description of each dataset and training settings.

\noindent \textbf{Architectures.} We used ResNet \cite{he2016deep} for the image classification tasks, and DeepLabV3 \cite{chen2017rethinking} for semantic segmentation. Regarding the NLP recognition task, we train a Global Pooling CNN (GPool-CNN)  architecture \cite{lin2013network}, following \cite{mukhoti2020calibrating}. For a fair comparison, we employ the same settings across all the benchmarks and models. We refer the reader to Appendix~\ref{sec:ap:data} for a detailed description of the training settings.

\noindent \textbf{Metrics.} To evaluate the calibration performance, we resort to the standard metric in the literature \cite{mukhoti2020calibrating}: expected calibration error (ECE) \cite{naeini2015ece}. This metric represents the expected absolute difference between the predicted confidence and model accuracy: $ \mathbb{E}_{\pp} [|\mathbb{P}(\hat{\yy} = \yy | \hat{p}) - \hat{p} |]$.
In practice, an approximate estimation is used to calculate ECE given a finite number of samples. Specifically, we group the predictions into $M$ equispaced bins. Let $B_{m}$ denote the set of samples with predicted confidence belonging to the $m^{th}$ bin, where the interval is $[\frac{i-1}{M}, \frac{i}{M}]$.
Then, the accuracy of $B_{m}$ is: $A_{m} = \frac{1}{|B_m|} \sum_{i \in B_m} \mathbbm{1}(\hat{\yy}_i = \yy_i )$, where $\mathbbm{1}$ is the indicator function.
Similarly, the mean confidence of $B_m$ is defined as the average confidence of all samples in the bin : $C_{m} = \frac{1}{|B_m|} \sum_{i \in B_m} \hat{p}_i$.
Then, ECE can be approximated as a weighted average of the absolute difference between the accuracy and confidence of each bin:

\vspace{-3mm}

\begin{align}
\label{eq:ece}
%   1 - Dice  \leq -log(Dice) \leq K\cdot CE + Cst
  ECE =  \sum_{m}^{M} \frac{|B_m|}{N} |A_m - C_m|
\end{align}
In our implementation, the number of bins is set to $M=15$. We also consider Adaptive ECE (AECE) for which bin sizes are calculated to evenly distribute samples across the bins.

%The widely used visualising calibration, \textit{reliability diagrams}\cite{mizil2015predictinggood}, is also used in this paper, which plots the accuracy of each bin as a function of its confidence.
%For a perfectly calibrated model, it is a diagonal line as the accuracy of each bin ideally matches the corresponding confidence.
%By contrast, the curve mostly lies above the diagonal for an under-confident model, while an over-confident model would show a curve mostly below the diagonal.

% In addition, we show the widely employed \textit{reliability diagrams} \cite{mizil2015predictinggood}, which plot the accuracy as a function of the confidence. A perfectly calibrated model has a reliability diagram that approximates a diagonal line, since the accuracy of each bin ideally matches the corresponding confidence. In contrast, the curve mostly lies above the diagonal for an under-confident model, while an over-confident model would show a curve mostly below the diagonal.
To measure the discriminative performance of classification models, we provide the accuracy (Acc) on the testing set. Finally, the mean intersection over union (mIoU) is employed to measure the segmentation performance.

%and Top-5 accuracy (Acc@5).
% Furthermore, Top-5 accuracy (Acc@5) and Top-5 error (Err@5) are also included as better calibrated model should present better scores on these metrics.
%On the other hand, the mean intersection over union (mIoU) is employed to measure the segmentation performance.

\noindent \textbf{Baselines.} In addition to cross-entropy (CE), we evaluate the performance of relevant works, including label smoothing (LS) \cite{szegedy2016rethinking}, focal loss (FL) \cite{lin2017focal} and explicit confidence penalty (ECP) in \cite{pereyra2017regularizing}. In addition, we also include the results from the recent adaptive sample-dependent focal loss (FLSD) in \cite{mukhoti2020calibrating}, which provided highly competitive calibration performances and advocated the use of FL for calibration\footnote{In fact, initially designed for object detection, FL was not used for calibration before the recent study in \cite{mukhoti2020calibrating}.}. To set the hyper-parameters of the different methods, we employed the values reported %follow the best practices 
in recent literature \cite{muller2019does,mukhoti2020calibrating}. More precisely, the smoothing factor $\alpha$ in LS is set to $0.05$, $\gamma$ in FL is set to $3$, and the scheduled $\gamma$ in FLSD is $5$ for $s_k \in [0, 0.2)$ and $3$ for $s_k \in [0.2, 1)$ (with $k$ being the right class for a given sample).
Last, we empirically set the balancing hyper-parameter in ECP to $0.1$, as it brings consistent performance in our experiments. 

%In addition to cross-entropy (CE), other related methods include including label smoothing (LS), focal loss(FL) and negative entropy prediction penalty (ECP) \cite{pereyra2017regularizing}. We also compare with adaptive sample-dependent focal loss (FLSD) proposed in \cite{mukhoti2020calibrating}. Regarding the hyper-parameters of LS, FL and FLSD, we follow the best practices in \cite{guo2017calibration}. Specially, the smoothing factor $\alpha$ in LS is set to $0.05$, $\lambda$ in FL is set to $3$, and the scheduled $\lambda$ in FLSD is $5$ for $s_k \in [0, 0.2)$ and $3$ for $s_k \in [0.2, 1)$ ($k$ is the right class for the sample). We empirically set the trade-off hyper-parameters in ECP to $0.1$ , as it brings consistent performance in our experiments. 

\noindent \textbf{Our method.} The proposed method has only one hyper-parameter $m$ (we kept $\lambda$ fixed to $0.1$, so that the label-smoothing term has the same budget of hyper-parameters as the other methods). 
%During our experiments, we empirically found that fixing $\lambda = 0.1$ yielded consistent performance, so we kept this value for all the experiments with margins. 
As for margin $m$, it was chosen based on the validation set of each dataset, which yielded relatively stable margin values across different tasks and consistent behaviour over both validation and testing data (See Figure \ref{fig:tiny-margin}): $m=6$ on CIFAR-10 and 20 Newsgroup, and $m=10$ on Tiny-Imagenet, CUB-200-2011 and Pascal VOC Segmentation.
% \textcolor{red}{and XXXX for the 20 newsgroup dataset....}.
Note that we perform ablation studies to assess the impact of varying $m$.

%The hyper-parameters introduced by our method are the constrain of logit distance $m$ and the trade-off weight $\lambda$. In our experiments, we empirically find that it can yield consistent performance with fixed $\lambda = 0.1$ and we tune only $m$. However, the best value of $m$ is also relatively stable for each task : $6$ on CIFAR-10/100, $10$ on Tiny-Imagenet, CUB-200-2011 and Pascal VOC Segmentation.
%We also study the effect of different $m$ and show how it affect the results.

%\subsection{Results on image classification. } 
\subsection{Results} 
\label{ssec:quant}

%In Table \ref{table:big}, we report ECE (\%) along with Acc (\%) on the test set of four image classification datasets with different model architectures. The lower the two scores, the better the model. Overall, this table clearly shows that the proposed loss outperform all the other baseline, which delivers the best calibration performance in general.  Out of the total $11$ experimental settings, it yields the lowest ECE in $7$ cases and the second lowest in $2$ settings. With the effect of the introduced penalty, our solution increase the baseline CE by a large margin. For example, it brings $71.9\%$ relative improvement on CIFAR-10 using ResNet-50, while the improvement is $55.9\%$ on Tiny-ImageNet using and $70.8$ on CUB-200-2011 using ResNet-50. Regarding the Err, the results of our method are comparable, stable and even better in many cases. Nevertheless, the test error of other related losses is unstable mainly due to the side effect of its aggressive target of pushing logit distance to $0$ and unbounded gradient at the vicinity, like LS under CIFAR-10 ResNet-50 and CIFAR-10 ResNet110, FL under CIFAR-10 DenseNet-121 and Tiny-ImageNet ResNet-101.

\noindent \textbf{Standard image classification benchmarks.} We first evaluate the calibration behaviour of both baselines and proposed model on two well-known image classification datasets, whose results are reported in Table \ref{table:big}. In particular, we show that training a model with hard targets, i.e., CE, leads to miscalibrated predictions across datasets and backbone architectures. In addition, by penalizing low-entropy predictions, either explicitly (i.e., ECP \cite{pereyra2017regularizing}) or implicitly (i.e., LS \cite{szegedy2016rethinking}, FL \cite{lin2017focal} and FLSD \cite{mukhoti2020calibrating}), we can typically train better calibrated networks. %, at least on more traditional datasets. 
Intuitively, the regularization terms added by these methods interplay with the main cross-entropy objective, controlling up to some extent the amount of confidence on the predictions. %predictive estimates. 
Thus, even though the impact of the different methods differs across datasets, the calibration performance is typically improved over the standard cross entropy training. Last, we can observe that both versions of our model yield the best results in nearly all of the cases, with just one setting ranking second across all the models. Furthermore, the significant improvement observed when the margin is included, i.e., $m>0$, motivates its use, suggesting that our method provides better calibrated networks. {\em An interesting observation is that, while existing models are quite sensitive to the employed backbone, the predicted uncertainty estimates provided by our models are considerably robust, presenting the smallest variations across architectures}. For instance, when using higher-capacity backbones on CIFAR-10, calibration metrics across all existing methods are considerably degraded (ECP \cite{pereyra2017regularizing}:+2.4, LS \cite{szegedy2016rethinking}:+0.77, FL \cite{lin2017focal}:+0.7 and FLSD: \cite{mukhoti2020calibrating}:+0.74), whereas models calibrated with our approach suffer minor changes (\textit{Ours}:+0.22).
For the reliability diagrams of the models, please refer to Appendix~\ref{sec:ap:diagram}.

%\textcolor{red}{Jose: If there is space left, add here some explanation/intuition about why this. LS could be more aggressive by enforcing zero/smaller margins, whereas the fact of allowing margins in our method might help on the gradients. Look at what we wrote in the motivation and link with it....} \textcolor{red}{Nevertheless, the test error of other related losses is unstable mainly due to the side effect of its aggressive target of pushing logit distance to $0$ and unbounded gradient at the vicinity, like LS under CIFAR-10 ResNet-50 and CIFAR-10 ResNet110, FL under CIFAR-10 DenseNet-121 and Tiny-ImageNet ResNet-101.}

In terms of discriminative performance (Table~\ref{table:bigClass}), we can observe that, on the one hand, MbLS yields performances on par with LS and CE, sometimes ranking as the best method. On the other hand, FL and its variant FLSD obtain the worst results, with performance gaps %ranging from
of 1-3\% lower than the proposed model. These results suggest that, in the standard image classification benchmarks used for calibration, our model achieves the best calibration performance, whereas it maintains, or improves, the discriminative power of state-of-the-art classification losses investigated for calibration.

\begin{table}[h!]
  \caption{Classification performance %for different approaches 
  on two popular image classification benchmarks. %Test accuracy (Acc) (\%) is reported. %
  Best method is highlighted in bold, whereas the second best method is underlined. $\Delta$ columns highlight the differences with regard to the best method in each case.
  }
  \label{table:bigClass}
  \centering
  %\small \vspace{1em}
  \footnotesize
  \resizebox{1.0\columnwidth}{!}
  {
  \setlength{\tabcolsep}{2.0pt}
  \begin{tabular}{@{}llccccccccccc@{}}
    \toprule
    %\multicolumn{2}{c}{Part}                   \\
    % \cmidrule(r){1-2}
    \multirow{2}{*}{\textbf{Dataset}}  &     \multirow{2}{*}{\textbf{Model}}  & 
    \multirow{2}{*}{\textbf{CE}} & 
    \multirow{2}{*}{\textbf{ECP}} &
	\multirow{2}{*}{\textbf{LS} } & \multirow{2}{*}{\textbf{FL} } &
	\multirow{2}{*}{\textbf{FLSD}} & 
    \multicolumn{2}{c}{\textbf{Ours (m=0)}} && \multicolumn{2}{c}{\textbf{Ours}} \\
     \cmidrule{8-9} \cmidrule{11-12} 
     &&&&&&& Acc & $\Delta$ && Acc & $\Delta$ \\
    \midrule
    \multirow{2}{*}{Tiny-ImageNet} & R-50 & 65.02 & 64.98 & \textbf{65.78} & 63.09 & 64.09 &  \underline{65.15} & -0.63 &&  64.74 & -1.04 \\
    & R-101 & 65.62 & 65.69 & \textbf{65.87} & 62.97 & 62.96 & 65.72 & -0.15 && \underline{65.81} & -0.06 \\
    \midrule
    \multirow{2}{*}{CIFAR-10} & R-50 & 93.20 & 94.75 &  \underline{94.87} & 94.82 & 94.77 &  94.76 & -0.49 && \textbf{95.25} & +0.38  \\
    & R-101 & 93.33 & 93.35 & 93.23 & 92.42  & 92.38 &  \textbf{95.36} &  +0.23 && \underline{95.13} & -0.23  \\
    % \midrule
    % \rowcolor{LightGray}
    % & ResNet-50 &  73.35 & 90.52 && 73.46 & \underline{90.54} &&  \underline{74.75} & 90.32 && 72.83 & 89.89 && 73.30 & 90.25 && \textbf{74.94} & 90.28 && 74.70  & \textbf{90.66} \\
    % \rowcolor{LightGray}
    % \multirow{-2}{*}{CUB-200-2011}  & ResNet-101 & 73.09 & 90.32 && 73.51 & 90.80 && \underline{74.51} & 89.97 && 72.87 & \textbf{91.11} && 72.59 & \underline{91.01} && 73.92 & 90.09 && \textbf{74.56} & 90.40 \\
    \bottomrule
  \end{tabular}
  }
\end{table}

%In terms of discriminative performance (Table \ref{table:bigClass}), we can observe that the proposed MbLS provides performance at par with LS and CE. On the other hand, FL and its variant FLSD obtain the worst results.....

%Highlight stability of our model wrt to backbone...

% bigClass

\noindent \textbf{Fine-grained image classification.} We now investigate the calibration and discriminative performance on a more complex scenario. In particular, in the previous section we assessed the behaviour of a variety of methods in the scenario of clearly different categories, whereas in this study we include subordinate classes of a common superior class. This setting is arguably more challenging, mostly due to the difficulty of finding informative regions and extracting discriminative features across subcategories. Results from this study are reported in Table \ref{tab:CUB}. In line with previous results, networks trained with hard-encoded labels leads to overconfident networks. Explicitly penalizing low entropy predictions, i.e., ECP \cite{pereyra2017regularizing}, or implicitly with LS results in better calibrated and higher performing models. Nevertheless, if FL and its variant FLSD are used for training, both calibration and classification performances are degraded, leading to the worst results across models. This suggests that, even though FL has been recently shown to work very competitively on the standard benchmarks \cite{mukhoti2020calibrating}, its calibration benefits might vanish on more complex datasets. Last, the network trained with the proposed MbLS method obtains the best calibration and classification performances, with a remarkable gap compared to the existing methods. Note that, for the sake of fairness, the hyperparameters used for all the models, including our method, are the same as the ones employed on the previous section for Tiny-ImageNet.

%It is noteworthy to mention that the hyperparameter in LS (i.e., $\lambda$) and 

\begin{table}[h!]

  \caption{Results on the fine-grained image classification benchmark  \textit{CUB-200-2011} with ResNet-101 as backbone. %We report test ECE and AECE to measure calibration performance, while test accuracy (Acc) is included for classification performance.  
  }
  \vspace{-2mm}
  \label{tab:CUB}
  \centering
  \footnotesize
 
  %\resizebox{0.7\columnwidth}{!}
 % {
  \begin{tabular}{@{}lccc@{}}
    \toprule
    %\multicolumn{2}{c}{Part}                   \\
    % \cmidrule(r){1-2}
    Method    & Acc  & ECE & AECE \\
    \midrule
     CE & 73.09 & 6.75 & 6.65  \\
    % LS ($\alpha=0.05$)    &   & \textbf{34.22} & 13.81 & 3.165  & 3.158 & 0.1384 & 1.413 \\
    ECP \cite{pereyra2017regularizing} & 73.51 & 5.55 & 5.44 \\
    LS \cite{szegedy2016rethinking}   & 74.51  & 5.16 & 5.14 \\
    FL \cite{lin2017focal}   & 72.87 & 8.41 & 8.39 \\
    FLSD \cite{mukhoti2020calibrating}   & 72.59 & 8.54 & 8.53 \\
    \midrule
    % \cmidrule{2-5}
    \textbf{Ours (m=0)} & 73.92 & 5.11 & 5.29 \\
    \textbf{Ours} & \textbf{74.56}  & \textbf{2.78} & \textbf{2.63} \\
    \bottomrule
  \end{tabular}
  %}
  %\vspace{-5mm}
\end{table}

\begin{figure}[h!]
    \centering
    % \vspace{-0.15in}
    \includegraphics[width=0.9\columnwidth]{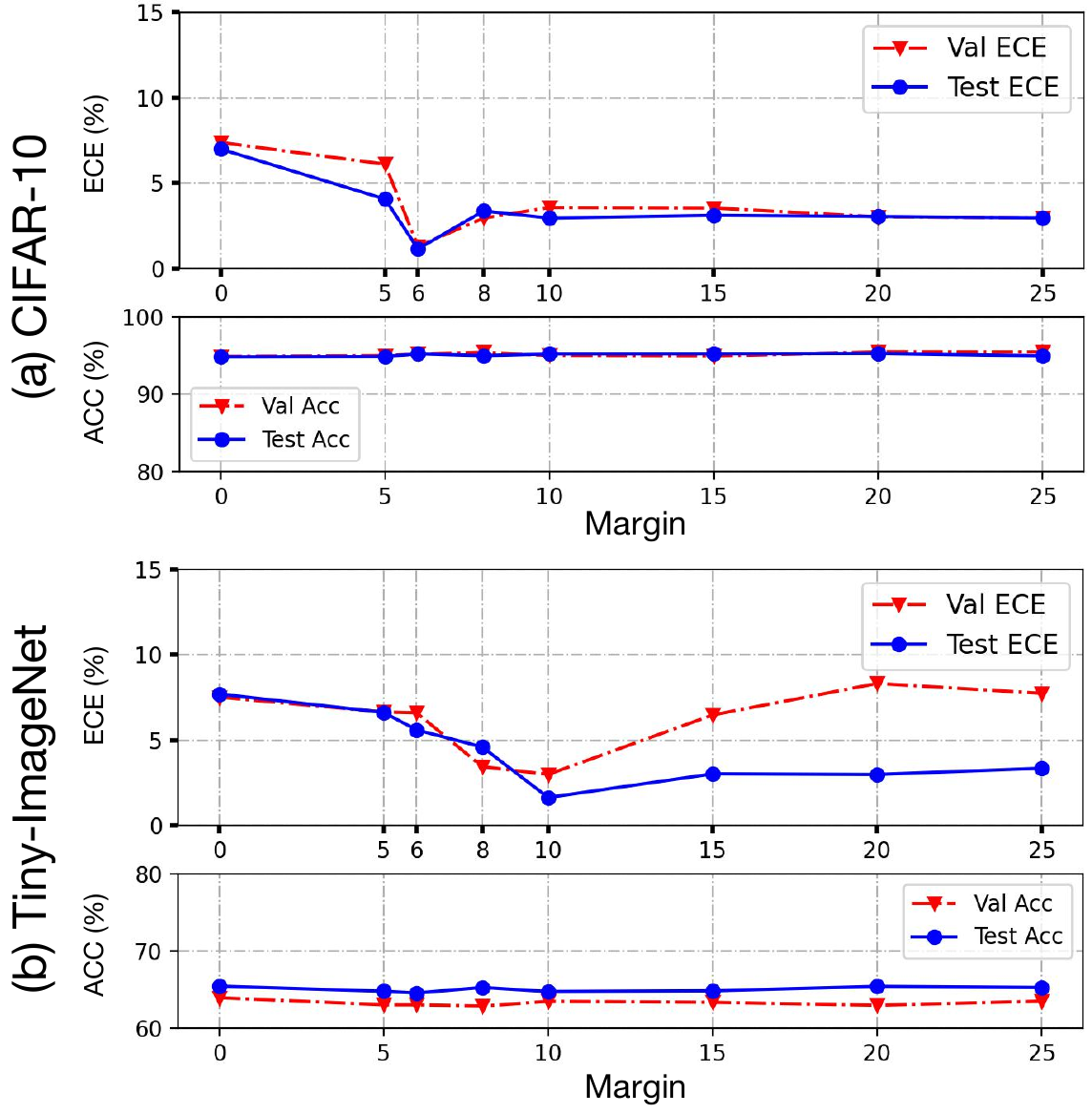}
    % \vspace{-3mm}
    \caption{\textbf{Evaluating the effect of the margin (m).} %in our method.} 
    We present the variation of both ECE and Accuracy on CIFAR-10 (\textit{top}) and Tiny-ImageNet (\textit{bottom}) across different margin values. The network used in this study is ResNet-50 and $\lambda$ in Eq.~\ref{eq:our-l1} is set to 0.1.}
    \vspace{-3mm}
    \label{fig:tiny-margin}
\end{figure}

\noindent \textbf{Effect of the margin $\mathbf{m}$.} In this section, we study the impact of margin $m$ in Eq.~\ref{eq:our-l1}, as depicted on both validation and testing data in Figure~\ref{fig:tiny-margin}. %Noted that we also fixes the relative weight ($\lambda$) to $0.1$ as mentioned in the experimental settings in this experiment, and the network is ResNet-50. 
In particular, we show the evolution of calibration and classification metrics on two datasets, which differ significantly in their input dimensionality. %\textcolor{red}{Our hypothesis is that the choice of the optimal $m$ might depend on the dimensionality of the data.}
The objective of these experiments is to demonstrate the robustness of the method with respect to the margin values, and to show the consistency between the optimal margin values over validation and testing data. Despite the fact that optimal $m$ may vary across different data sets, different choices of $m$ do not affect the performance drastically. Indeed, we can observe that the trend in performance is similar for both datasets, particularly on the testing data. First, imposing a small margin value has a negative impact on the calibration, which might be due to the aggressive gradients resulting from the strong constraint (e.g., $m=0$). Once the optimal $m$ is obtained, larger values result in slightly worst calibrated networks, compared to the best model. Nevertheless, even if we select a network trained with a suboptimal margin, its calibration performance still outperforms state-of-the-art calibration losses. For example, with $m=20$, ECE is equal to 3.05 and 2.99 in CIFAR-10 and Tiny-ImageNet, respectively, whereas LS obtains 2.79 and 3.17, and FL yields 3.90 and 2.96. This demonstrates that our method is capable of bringing, at least, comparable improvements over current literature, even without the need of tuning the value of $m$ over a validation set. 

%Note that $\lambda$ is different....(The lambda is different. In Table 1, the lambda for ours(m=0) is 0.05, now it is 0.1)

%In Fig. \ref{eq:our-l1}(a, b) we show the logits distances on the Tiny-ImageNet training and validation set respectively at different training epoch. The measures of the logits distance are computed by averaging the largest logit distance of each ssample across the data subset. It is demonstrating that our method supports the explicit control of the logit distribution by varying the margin. Fig. \ref{eq:our-l1}(c) gives the comparison of the calibration performance (ECE on test set) with different margin values and the best result is achieve at the point of $10$.

%\subsection{Equivalence with Label Smoothing?}

\begin{table}[h!]
  \caption{Performance of \textit{our method without margin (m=0)} and label smoothing (LS) given equivalent weights on Tiny-ImageNet with ResNet-50.
%   \textbf{Note : $w=\alpha$ in LS (Eq. \ref{eq:ls-kl}) and $w=\lambda$  in Ours (Eq. \ref{eq:our-l1})}. 
  }
  \label{tab:nomargin}
  \centering
  \footnotesize
  \resizebox{0.9\columnwidth}{!}
  {
  \begin{tabular}{@{}c@{}}
       \begin{tabular}{@{}llccccc@{}}
        \toprule
        %\multicolumn{2}{c}{Part}                   \\
        % \cmidrule(r){1-2}
        &\multirow{2}{*}{Method} & \multicolumn{5}{c}{$\alpha$ in LS (Eq.~\ref{eq:ls-kl}) / $\lambda$ in Ours (Eq.~\ref{eq:our-l1})} \\
        \cmidrule{3-7}
         & & $0$ (CE) & $0.05$ & $0.1$ & $0.2$ & $0.3$ \\
        % \cmidrule{2-3} \cmidrule{5-6} \cmidrule{8-9} \cmidrule{10-11}
        % & Acc & ECE && Acc & ECE && Acc & ECE && Acc & ECE \\
        \midrule
         \multirow{2}{*}{ECE} & LS \cite{szegedy2016rethinking}  & 3.73 & 3.17 & 6.53 & 12.05 & 18.04 \\
        & \bf Ours ($m=0$)  & 3.73 & 2.50 & 7.70 & 14.48 & 21.93 \\
        % \cmidrule{2-5}
        \midrule
       \multirow{2}{*}{Acc} & LS \cite{szegedy2016rethinking} & 65.02 & 65.78 & 65.02 &  65.39 & 65.60 \\
        & \bf Ours ($m=0$)  & 65.02 & 65.15 & 65.43 & 65.14 & 66.02 \\
        \bottomrule
       
      \end{tabular}
  \end{tabular}
  }
\end{table}

\noindent \textbf{Equivalence with Label Smoothing.} As presented in the theoretical connections between different losses, Label Smoothing approximates a particular case of the proposed loss when $m$ is equal to $0$.
Table \ref{table:big} and Table \ref{tab:CUB} empirically show that the results of LS and Ours (m=0) are almost consistent for all cases.
It is noted that we follow the best practice in LS by setting equivalent balancing weight of $\lambda$ in Ours (m=0) to $0.05$ in the above experiments.
To further validate empirically this observation, we gradually increase the controlling hyperparameter in both LS ($\alpha$) and our method ($\lambda$). The results are presented in Table \ref{tab:nomargin}. It is seen that by varying the relative trade-off weights between the main cross-entropy and the penalty in both LS and our method we can obtain similar trends and scores, particularly for smaller values of the balancing terms.

\begin{figure*}[h]
    \centering
    % \vspace{-0.15in}
    \includegraphics[width=0.95\textwidth]{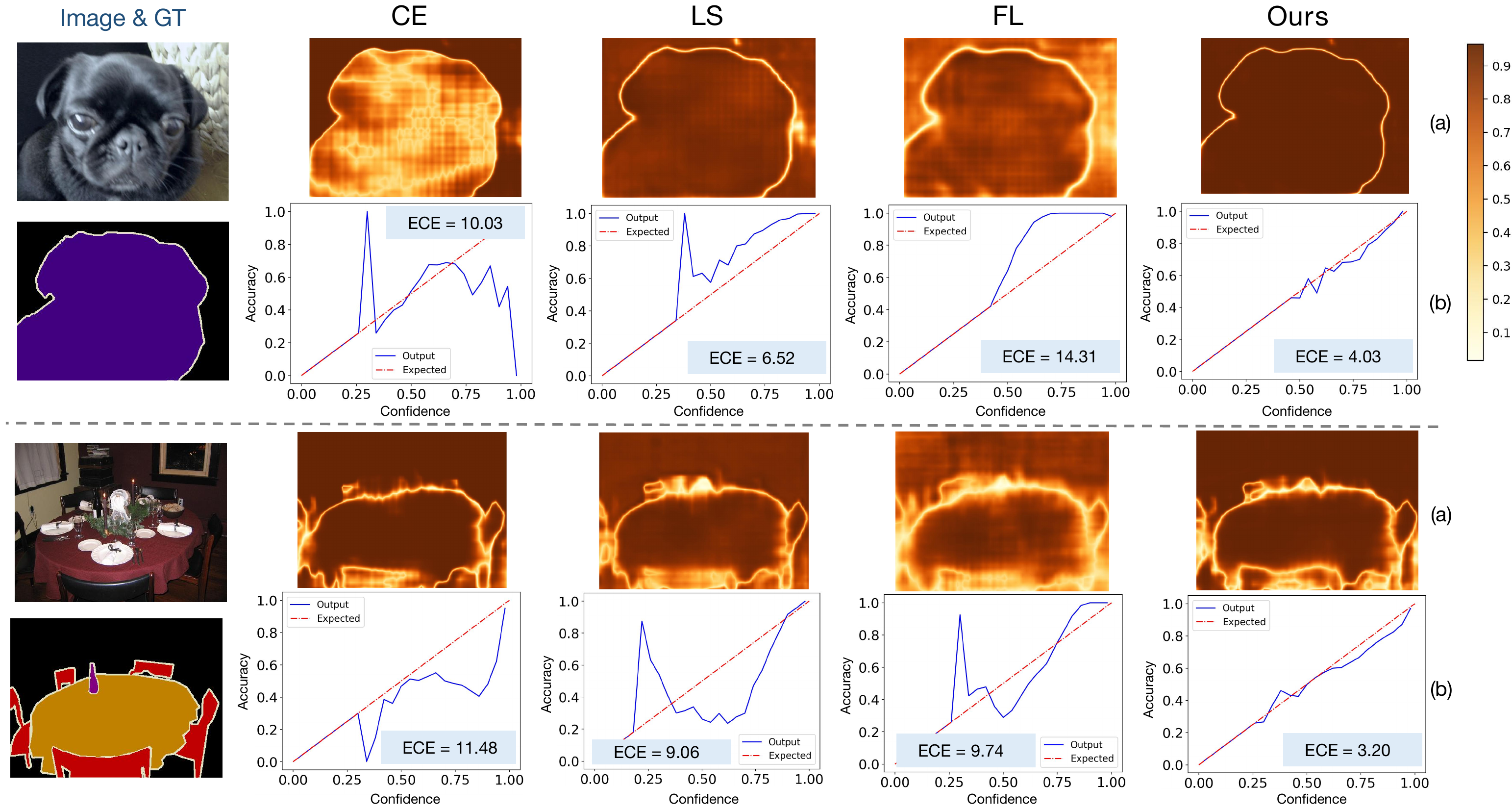}
    \vspace{-3mm}
    \caption{\textbf{Visual results on semantic segmentation.} %in our method.} 
    We present several examples from the qualitative segmentation results on the PASCAL VOC 2012 validation set, showing the superiority by our method, in terms of calibration performance. In the left, we give the original image with ground-truth (GT) mask, then we present the \textbf{ confidence map (a)} and the \textbf{reliability diagram (b)} with the ECE (\%) score for each method. The value of confidence map represent the predicted confidence, i.e., the element of the soft-max probability for the winner class. It is noted that deeper color denotes higher confidence in the map, as shown in the legend at the upper right corner.}
    \vspace{-4mm}
    \label{fig:segment}
\end{figure*}

\noindent \textbf{Results on image segmentation.} Segmentation performances on the popular Pascal VOC dataset are reported in Table \ref{tab:voc}. We can observe that, regardless of the backbone network, the proposed approach leads to the best calibrated and highest performing models, which is consistent with empirical observations in previous experiments. Differences between the proposed method and existing literature are further magnified when ResNet-50, a higher-capacity model, is used as a backbone network. These observations suggest that: \textit{i)} the probability values predicted by our method are a better estimate of the actual likelihood of correctness and \textit{ii)} its calibration performance does not degrade when increasing the model capacity.  %obtains consistently the best calibration metrics.
%Surprisingly, FL...

\begin{table}[h!]
  \caption{Segmentation results on the VOC 2012 validation set. %obtained by DeepLabV3 with different backbones. 
  Best methods are highlighted in bold.}
  \label{tab:voc}
  \centering
  \footnotesize
  \vspace{-2mm}
%   \resizebox{0.7\columnwidth}{!}
  {
  \begin{tabular}{@{}llccc@{}}
    \toprule
    %\multicolumn{2}{c}{Part}                   \\
    % \cmidrule(r){1-2}
    Backbone &  Method    & mIoU  & ECE & AECE \\
    \midrule
    \multirow{5}{*}{ResNet-34} & CE & 68.78 & 8.94 & 8.89  \\
    % LS ($\alpha=0.05$)    &   & \textbf{34.22} & 13.81 & 3.165  & 3.158 & 0.1384 & 1.413 \\
    & ECP \cite{pereyra2017regularizing} & 69.54 & 8.72 & 8.68 \\
    & LS \cite{szegedy2016rethinking}   & 69.71 & 8.11 & 8.47 \\
    & FL \cite{lin2017focal}   & 68.31 & 11.60 & 11.61 \\
    % \cmidrule{2-5}
    & \textbf{Ours} & \textbf{70.24}  & \textbf{7.93} & \textbf{8.00} \\
    \midrule
    \multirow{5}{*}{ResNet-50} & CE     & 70.92 & 8.26 & 8.23  \\
    % LS ($\alpha=0.05$)    &   & \textbf{34.22} & 13.81 & 3.165  & 3.158 & 0.1384 & 1.413 \\
    & ECP \cite{pereyra2017regularizing} & 71.16 & 8.31 & 8.26 \\
    & LS \cite{szegedy2016rethinking}   & 71.00 & 9.35 & 9.95\\
    & FL \cite{lin2017focal}    & 69.99 & 11.44 & 11.43 \\
    % \cmidrule{2-5}
    & \textbf{Ours} & \textbf{71.20}  & \textbf{7.94} & \textbf{7.99} \\
    % & \textbf{Ours} & \textbf{71.05}  & \textbf{7.59} & \textbf{7.70} \\
    
    %\midrule
    %\multirow{5}{*}{ResNet-101} & CE     & 73.25 & 7.27 & 7.23  \\
    % LS ($\alpha=0.05$)    &   & \textbf{34.22} & 13.81 & 3.165  & 3.158 & 0.1384 & 1.413 \\
    %& ECP \cite{pereyra2017regularizing} &  &  &  \\
    %& LS \cite{szegedy2016rethinking}   &  &  & \\
    %& FL \cite{lin2017focal}    &  &  &  \\
    % \cmidrule{2-5}
    %& \textbf{Ours} & 73.86  & 7.27 & 7.27 \\
    \bottomrule
  \end{tabular}
  }
  \vspace{-5mm}
\end{table}

%\noindent \textbf{Visual results on semantic segmentation.}

Furthermore, several visual results from the segmentation task are depicted in Figure~\ref{fig:segment}. In particular, we show the confidence maps (\textit{a}) and the reliability diagrams (\textit{b}) for each method. We can observe that the proposed model provides the best reliability diagrams, as the ECE curves are closer to the diagonal. This indicates that the predicted probabilities are a good estimate of the correctness of the prediction.
% In comparison to the reliability diagrams on image classification, i.e., Figure~\ref{fig:tiny-resnet50}, other methods degrades more in the this more challenging application, while our method shows better robustness and generability.
As for the confidence maps, one may observe several interesting facts. First, %our method tends to yield consistent confidence within the region for each object or class.
%In addition, 
the confidence maps obtained by our method show better edge sharpness, matching the expected property that the model should be less confident at the boundaries, while yielding confident predictions for within-region pixels. In contrast, one could observe that the other methods struggle to provide precise uncertainty estimates, particularly at region boundaries. In addition, in some other cases, existing methods may fail at generating reliable uncertainty regions even within the inner regions of the object (e.g., FL). These visual observations are further supported by the quantitative results in Table \ref{tab:voc}.
More examples are provided in Appendix~\ref{sec:ap:vis} (Figure~\ref{fig:ap:segment}).

%\subsection{Results on text classification. } 

\noindent \textbf{Results on text classification.} We also investigate the calibration of models trained on non-visual pattern recognition tasks, such as text classification, which are evaluated on the 20 Newsgroups dataset. Table \ref{table:NLP} reports the results on this benchmark, which show that the proposed model achieves better discriminative and calibration performance compared to existing works. It is noteworthy to mention that differences are substantial in terms of calibration, suggesting that the proposed approach provides significantly better uncertainty estimates for this task than the competing methods.

\begin{table}[h!]
  \caption{Results on the testing set of the 20 Newsgroups dataset. %, compared to related methods. 
  %Accuracy and \textit{ECE} for each method are reported and 
  Best method is highlighted in bold.}
  \label{table:NLP}
  \vspace{-2mm}
  \centering
  \small 
  \resizebox{1.0\columnwidth}{!}
  {
  \setlength{\tabcolsep}{2.0pt}
  \begin{tabular}{@{}cccccccccccccccccc@{}}
    \toprule
    %\multicolumn{2}{c}{Part}                   \\
    % \cmidrule(r){1-2}
     \multicolumn{2}{c}{\textbf{CE}} && \multicolumn{2}{c}{\textbf{ECP}} \cite{pereyra2017regularizing} &&
			\multicolumn{2}{c}{\textbf{LS} \cite{szegedy2016rethinking}} && \multicolumn{2}{c}{\textbf{FL} \cite{lin2017focal}} &&
			\multicolumn{2}{c}{\textbf{FLSD}} \cite{mukhoti2020calibrating} && 
			\multicolumn{2}{c}{\textbf{Ours}} \\
    %\cmidrule{1-2} \cmidrule{4-5} \cmidrule{7-8} \cmidrule{10-11} \cmidrule{12-13} \cmidrule{14-15} \cmidrule{16-17}
    \midrule
    Acc & ECE && Acc & ECE && Acc & ECE && Acc & ECE && Acc & ECE && Acc & ECE  \\
     67.01 & 22.75 && 66.48 & 22.97 && 67.14 & 8.07 && 66.08 & 10.80 && 65.85 & 10.87 &&  \textbf{67.89} & \textbf{5.40} \\
    \bottomrule
  \end{tabular}
  }
%   \vspace{-5mm}
\end{table}

%\section{Things to modify}
%\begin{itemize}
    %\item Remove Figure 1 and 2 \textcolor{red}{Done}
    %\item remove Acc@5 from table 2 and add one or two columns (maybe one for ours (m=0) and other with ours) highlighting the difference wrt the best method (only for classification) \textcolor{red}{Done}
    %\item Calibration metric in the object boundaries (in segmentation)?
    %\item visual results from segmentation?
    %\item Improve quality of Figure 3. \textcolor{red}{Done}
    %\item 1- CUB in table alone with RN-100 only \textcolor{red}{Done}
    %\item Which are the discriminative results for Table 4? I think providing both gives a better perspective of the performance of each method. 
%\end{itemize}

\vspace{-3mm}

\section{Limitations}

%Maybe discussing about not experiments in domain shift? This section sucks as it might help reviewers to find arguments to criticize our work....

% In this paper, we provided theoretic justifications connecting state-of-the-art calibration losses, which are drawn from a constrained-optimization perspective. Based on these insights, we proposed a novel margin-based label smoothing formulation, which allows to control the margin on the logit distances in an explicit manner.
Despite the superior performance of our method over existing approaches, there exist several limitations in this work. For instance, recent evidences in the literature \cite{ovadia2019can} have demonstrated that simple temperature scaling method does not work well under data/domain distributional shift, and advocate the use of more complex methods that take epistemic uncertainty into account as the shift increases, such as ensembles. Nevertheless, despite these findings, the performances of baselines (i.e., LS \cite{szegedy2016rethinking} or focal loss \cite{lin2017focal}) and the proposed model have not been well investigated in this scenario, which might shed light about potential benefits or drawbacks of these approaches on non-independent and identically distributed (\textit{i.i.d.}) regimes.
% \textcolor{red}{Furthermore, the hyperparameters employed for the existing methods are based on recent findings in the literature \cite{mukhoti2020calibrating,muller2019does}. As several datasets are not investigated in these works (e.g., CUB, PascalVOC or 20 Newsgroups), the set of hyperparameters might not be optimal.}  

% Update the cvpr.cls to do the following automatically.
% For this citation style, keep multiple citations in numerical (not
% chronological) order, so prefer \cite{Alpher03,Alpher02,Authors14} to
% \cite{Alpher02,Alpher03,Authors14}.

\section*{Acknowledgements}

This research work is supported by Prompt Quebec and Compute Canada. Adrian Galdran is funded by the EU's Horizon 2020 R\&I programme under the MSC grant agreement No. 892297.

%%%%%%%%% REFERENCES
% {\small
% \bibliographystyle{ieee_fullname}
% \bibliography{egbib}
% }

% \clearpage

\appendix

\section{Proof}
\label{sec:ap:proof}

Here we provide more details for the proof of Prop.~\ref{prop:ls} in the main text:

\begin{proposition}
\label{ap:prop:ls}
A linear penalty (or a Lagrangian) for constraint $\dd (\llll) = {\mathbf 0}$ is bounded from above and below
by ${\cal D}_\text{KL}\left({\mathbf u} || \s \right )$, up to additive constants:
\begin{align}
{\cal D}_\text{KL}\left({\mathbf u} || \s \right ) - \log(K) \cleq \frac{1}{K}\sum_k (\max_j (l_j) - l_k) \cleq  {\cal D}_\text{KL}\left({\mathbf u} || \s \right ) \nonumber
\end{align}
where $\cleq$ stands for inequality up to an additive constant.
\end{proposition}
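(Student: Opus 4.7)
The plan is to reduce the claimed bounds to the standard LogSumExp sandwich $\max_k l_k \leq \log \sum_j e^{l_j} \leq \max_k l_k + \log K$ by rewriting both sides of the inequalities in a common form. First I would expand the KL divergence against the uniform, ${\cal D}_\text{KL}({\mathbf u}\|\s)=\sum_k \tfrac{1}{K}\log(\tfrac{1/K}{s_k})$, producing a $-\log K$ term and a $-\tfrac{1}{K}\sum_k \log s_k$ term. Substituting the softmax identity $\log s_k = l_k - \log\sum_j e^{l_j}$ collapses the sum so that
\begin{align*}
{\cal D}_\text{KL}({\mathbf u}\|\s) \;=\; -\log K \;+\; \log\!\sum_j e^{l_j} \;-\; \tfrac{1}{K}\sum_k l_k.
\end{align*}

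Next I would denote the quantity of interest $P(\llll)=\tfrac{1}{K}\sum_k(\max_j l_j - l_k)=\max_k l_k - \tfrac{1}{K}\sum_k l_k$, which conveniently shares the averaged-logits term with the KL expression. Subtracting gives the clean identity
\begin{align*}
{\cal D}_\text{KL}({\mathbf u}\|\s) - P(\llll) \;=\; \log\!\sum_j e^{l_j} \;-\; \max_k l_k \;-\; \log K.
\end{align*}
This is the payoff of the setup: the difference depends only on the gap between $\log\sum_j e^{l_j}$ and $\max_k l_k$, which is the quantity controlled by the standard LogSumExp bound.

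The main (and essentially only) step is now to invoke $0 \le \log\sum_j e^{l_j}-\max_k l_k \le \log K$. This immediately yields $-\log K \le {\cal D}_\text{KL}({\mathbf u}\|\s)-P(\llll) \le 0$, i.e.,
\begin{align*}
{\cal D}_\text{KL}({\mathbf u}\|\s) \;-\;\log K \;\le\; P(\llll) \;\le\; {\cal D}_\text{KL}({\mathbf u}\|\s) \;+\;\log K.
\end{align*}
Both ends of the displayed sandwich in Proposition~\ref{prop:ls} then follow directly, absorbing $\log K$ into the additive constant implicit in $\cleq$.

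I do not anticipate a real obstacle: the whole argument is a one-line algebraic rewriting plus the LogSumExp inequality. The only thing to be careful with is bookkeeping of the $-\log K$ constants (one coming from the uniform entropy in the KL, one from LogSumExp), making sure they combine to land exactly on the stated bounds rather than on off-by-a-constant variants. Since $\cleq$ tolerates additive constants anyway, this is a mild concern.
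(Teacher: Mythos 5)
Your proof is correct and follows essentially the same route as the paper's: expand $\mathcal{D}_\text{KL}(\mathbf{u}\,\|\,\mathbf{s})$ via the softmax identity and apply the LogSumExp sandwich. Your variant of computing the exact difference $\mathcal{D}_\text{KL}(\mathbf{u}\,\|\,\mathbf{s}) - P(\llll) = \log\sum_j e^{l_j} - \max_k l_k - \log K$ is a slightly cleaner bookkeeping of the additive constants than the paper's, but it is the same argument.
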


\begin{proof}

Given the expression of the KL divergence:
\begin{align}
\label{ap:eq:kl}
{\cal D}_\text{KL}\left({\mathbf u} || \s \right ) = \frac{1}{K} \sum_k \log \left (\frac{1/K}{s_k} \right ) \ceq - \frac{1}{K} \sum_k \log(s_k) \nonumber
\end{align}
where $\ceq$ stands for equality  up to an additive and/or non-negative multiplicative constants and ${\mathbf u}$ is the uniform distribution, and given the definition of softmax function:
\begin{align}
% \label{ap:eq:softmax}
s_{k} = \frac{e^{l_k}}{\sum_j^K e^{l_j}} \nonumber
\end{align}
we have:
\begin{align}
% \label{ap:eq:kl-soft}
{\cal D}_\text{KL}\left({\mathbf u} || \s \right ) &\ceq - \frac{1}{K} \sum_k \log \left (\frac{e^{l_k}}{\sum_j^K e^{l_j}} \right ) \nonumber\\
&= \frac{1}{K} \sum_{k}^K \left ( \log{\sum_j^K e^{l_j}} - l_k \right ) \nonumber
\end{align}
Then, considering the following well-known property of the LogSumExp function:
\begin{align}
    \max_j(l_j) \leq \log{\sum_j^K e^{l_j}} \leq \max_j(l_j) + \log(K) \nonumber
\end{align}
We obtain :
\begin{align}
{\cal D}_\text{KL}\left({\mathbf u} || \s \right ) - \log(K) \cleq \frac{1}{K}\sum_k (\max_j (l_j) - l_k) \cleq  {\cal D}_\text{KL}\left({\mathbf u} || \s \right ) \nonumber
\end{align}

Furthermore, given the definition of the logit distances, i.e., $\dd (\llll) = (\max_j (l_j) - l_k)_{1 \leq k \leq K} \in \mathbb{R}^{K}$, the penalty term, ${\cal D}_\text{KL}\left({\mathbf u} || \s \right )$, imposed by Label Smoothing (LS) is approximately optimizing a linear penalty (or a Lagrangian) for logit distance constraint:
\begin{align}
\dd (\llll) = {\mathbf 0} \nonumber
\end{align}
which encourages equality of all logits.

\end{proof}

\section{Dataset Description and Implementation Details}
\label{sec:ap:data}

In this section, we present the description of all the datasets used in our experiments, as well as the related implementation details.

% \begin{itemize}
\noindent \textbf{CIFAR-10} \cite{krizhevsky2009learning} is an image classification dataset that includes a total of 60,000 images with size $32\times32$, divided equally into 10 classes. In our experiments, we use the standard train/validation/test split containing 45,000/5,000/10,000 images, respectively. During the experiments, we fixed the batch size to 128 and use SGD optimizer with a momentum of $0.9$. The number of training epochs is set to 350, with a multi-step learning rate decay strategy, i.e., learning rate of 0.1 for the first 150 epochs, 0.01 for the next 100 epochs and 0.01 for the last 100 epochs.
Data augmentation techniques like random crops and random horizontal flips are applied on the training set.

\begin{table*}[htb!]
%  \captionsetup{font=footnotesize}
  \caption{ECE for different methods with pre- and post-temperature scaling. Optimal T is indicated in brackets.
  }
  \label{table:temp}
  \centering
  %\small \vspace{1em}
   \footnotesize
  \vspace{-1.em}
  \resizebox{0.8\textwidth}{!}
  {
  \setlength{\tabcolsep}{2.0pt}
  \begin{tabular}{@{}lllccccccccccccccc@{}}
    \toprule
    %\multicolumn{2}{c}{Part}                   \\
    % \cmidrule(r){1-2}
    \multirow{2}{*}{\scriptsize Dataset}  & \multirow{2}{*}{\scriptsize Model}  && \multicolumn{2}{c}{\textbf{CE}} &&
			\multicolumn{2}{c}{\textbf{LS}} && \multicolumn{2}{c}{\textbf{FL}} &&
			\multicolumn{2}{c}{\textbf{FLSD}} &&  \multicolumn{2}{c}{\textbf{Ours}} \\
    % \cmidrule{5-6} \cmidrule{8-9} \cmidrule{11-12} \cmidrule{14-15}
    &  && PreT & PosT && PreT & PosT && PreT & PosT && PreT & PosT && PreT & PosT \\
    \midrule
    \multirow{2}{*}{\scriptsize Tiny-ImageNet} & {\scriptsize R-50} && 3.73 & 1.86 (1.1)  && 3.17 & 1.79 (0.9) && 2.96 & 1.74 (0.9) && 2.91 & 1.74 (0.9) && \textbf{1.64} & \textbf{1.64} (1.0) \\
    & {\scriptsize R-101} && 4.97 & 2.01 (1.2) && 2.20 & 2.20 (1) && 2.55 & 2.22 (0.9) && 4.91 & 1.64 (0.9) && \textbf{1.62} & \textbf{1.62} (1.0) \\
    \midrule
    \multirow{2}{*}{\scriptsize CIFAR-10} & {\scriptsize R-50} && 5.85 & 2.34 (3.9) && 2.79 & 1.75 (0.9) && 3.90 & 1.34 (0.7) && 3.84 & 1.30 (0.7) && \textbf{1.16} & \textbf{1.16} (1.0) \\
    & {\scriptsize R-101} && 5.74 & 2.51 (3.9) && 3.56 & 2.71 (0.9) && 4.60 & 1.24 (1.4) && 4.58 & 1.21 (1.9)  && \textbf{1.38} &  \textbf{1.13} (0.9) \\
     \midrule
    {\scriptsize CUB-200-2011} & {\scriptsize R-101} && 6.75 & 2.00 (1.2) && 5.16 & 3.05 (0.9) && 8.41 & 2.45 (0.8) && 8.54 & 3.61 (3.8)  && \textbf{2.78} & \textbf{1.72} (1.2) \\
    \midrule
    {\scriptsize 20 News} & {\scriptsize GPCN} && 22.75 & 3.01 (3.1) && 8.07 & 3.69 (1.2) && 10.80 & 3.33 (1.4) && 10.87 & 4.10 (1.4)  && \textbf{5.40} & \textbf{2.09} (1.1) \\
    \bottomrule
  \end{tabular}
  }
  \vspace{-1 em}
\end{table*}

\noindent \textbf{Tiny-ImageNet} \cite{deng2009imagenet} is a subset of ImageNet containing 64$\times$64 dimensional images, with 200 classes and 500 images per class in the training set, and 50 images per class in the validation set. %The image dimensions of Tiny-ImageNet are twice that of CIFAR-10 images. 
Following the setting in \cite{mukhoti2020calibrating}, we use 50 samples per class (a total of 10,000 samples) from the training set as a validation set and the original validation set as a test set. 
The batch size is set to $64$. We train for 100 epochs with a learning rate of 0.1 for the first 40 epochs, of 0.01 for the next 20 epochs and of 0.001 for the last 40 epochs.

\noindent \textbf{CUB-200-2011}\cite{WahCUB_200_2011} is the most popular fine-grained benchmarking dataset. As an extended version of the CUB-200 dataset, with roughly double the number of images per class and new part location annotations, it consists of $5,994$ training and $5,794$ test images, belonging to 200 bird species. We augment the images during training, i.e., we resize the images to $256 \times 256$ and then randomly crop patches of $224 \times 224$ from the scaled images or their horizontal flip as inputs.
We initialize the model by pre-trained weights on ImageNet and then train on this dataset for 200 epochs.
The batch size is set to 16 and SGD optimizer is used with a momentum of 0.9.
The learning rate is initialized as 0.1 and decayed by a factor of 0.1 every 80 epochs. Note that, for margin $m$, we used the optimal $m$ found on the validation set of Tiny-ImageNet (we did not use a validation set for CUB-200-2011).
    
\noindent \textbf{PASCAL VOC 2012}\cite{VOC2015} semantic segmentation benchmark contains 20 foreground object classes and one background class. The data is split into $1,464$ images for training, $1,449$ for validation and $1,456$ for testing. Note that the calibration performance on test set is unavailable, as the ground-truth on test set is not publicly released. Therefore, we only report the performances on validation set by using the best hyper-parameters found on the Tiny-ImageNet classification benchmark for all the methods, without any further tuning on the segmentation validation set.
During training, we randomly crop the images to a $512\time512$ resolution, and apply other augmentations such as random horizontal flip, random brightness changes or contrast transformation. 
To train the segmentation model, we employ the popular public library\footnote{\url{https://github.com/qubvel/segmentation_models.pytorch}}, where the encoder is initialized with the weights pre-trained on ImageNet and the decoder is trained from scratch. The batch size is set to 8, and the momentum of the SGD optimizer to 0.9. The learning rate is initialized as 0.01, and decayed by a factor of 0.1 every 40 epochs. 
Finally, the network is trained for 100 epochs.

\noindent \textbf{20 Newsgroups} \cite{lang1995newsweeder} is a popular text classification benchmark, containing 20,000 news articles, which are categorised evenly into 20 different groups based on their content. While some of the groups are significantly related (e.g. rec.motorcycles and rec.autos), other groups are completely unrelated (e.g. sci.space and misc.med). We use the standard train/validation/test split containing 15,098/900/3,999 documents, respectively. To train the Global Pooling Convolutional Network (GPCN) \cite{lin2013network}, we use Glove word embeddings \cite{pennington2014glove}. Adam is used as optimizer with an initial learning rate of 0.001, and beta values equal to 0.9 and 0.999. The training is performed during 100 epochs, with a learning-rate decay by a factor of 0.1 after the first 50 epochs.

\begin{figure*}[h]
    \centering
    % \vspace{-0.15in}
    \includegraphics[width=0.95\textwidth]{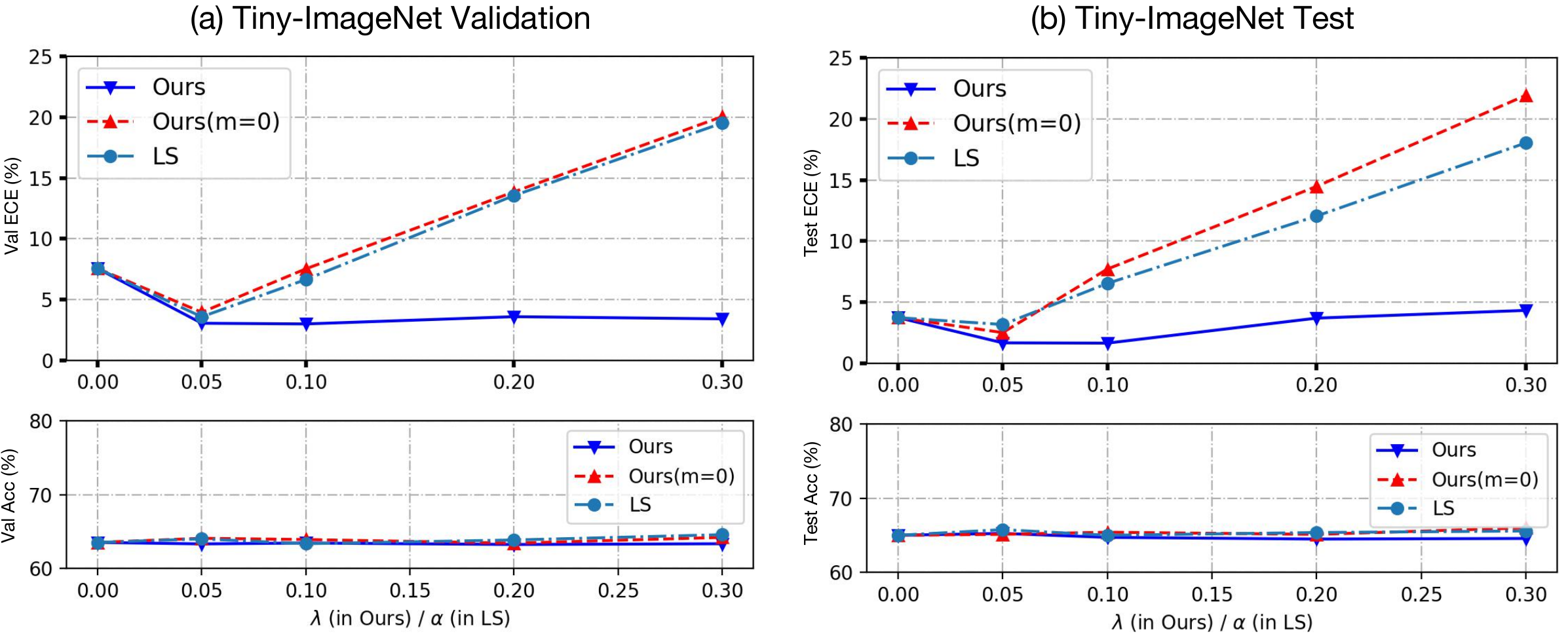}
    \vspace{-2mm}
    \caption{\textbf{Evaluating the effect of the balancing weight.} %in our method.} 
    We present the variation of both ECE and Accuracy on the Tiny-ImageNet validation set (\textit{left}) and on Tiny-ImageNet test set (\textit{right}) using different balancing weight values, i.e., $\lambda$ in our method and $\alpha$ in LS. The network used in this study is ResNet-50.}
    \vspace{-2mm}
    \label{ap:fig:tiny-lambda}
\end{figure*}

\section{Ablation study on the balancing weight}
\label{sec:ap:lambda}

We now investigate the impact of the balancing weight $\lambda$ in our method, and compare it to the effect of $\alpha$ in Label Smoothing (LS), whose results are depicted in Figure~\ref{ap:fig:tiny-lambda}.
In particular, we show the evolution of calibration and classification metrics on Tiny-ImageNet validation and test sets.
One may observe that, unlike LS, our method with margin is more robust with respect to the balancing weight in both subsets. Furthermore, the high similarity in the ECE curves of LS and Ours ($m=0)$ support our theoretical connections stating that LS approximates a particular case of the proposed loss when the margin is equal to 0.

\section{Results with post temperature scaling}
\label{sec:ap:post}

In Table \ref{table:temp}, we compare with the method of applying post temperature scaling (PosT)\cite{guo2017calibration} on the outputs of the CE-trained model.
As this technique is orthogonal to the learning objectives, we also include the results when applying this post-processing to the proposed method.
We can see that the PreT scores obtained by our method outperform the PosT results from CE across all the cases. Furthermore, our method with PosT also achieves the best performance across the datasets and backbones. It is worth noting that the proposed method has optimal temperature values very close to 1 (see Table \ref{table:temp}), indicating that our models are already well calibrated. Note that the results of post-hoc scaling might be highly sensitive to the validation sets and data characteristics.

\begin{figure*}[ht!]
% \begin{wrapfigure}{rt!}{0.5\textwidth}
    \centering
    \includegraphics[width=0.9\textwidth]{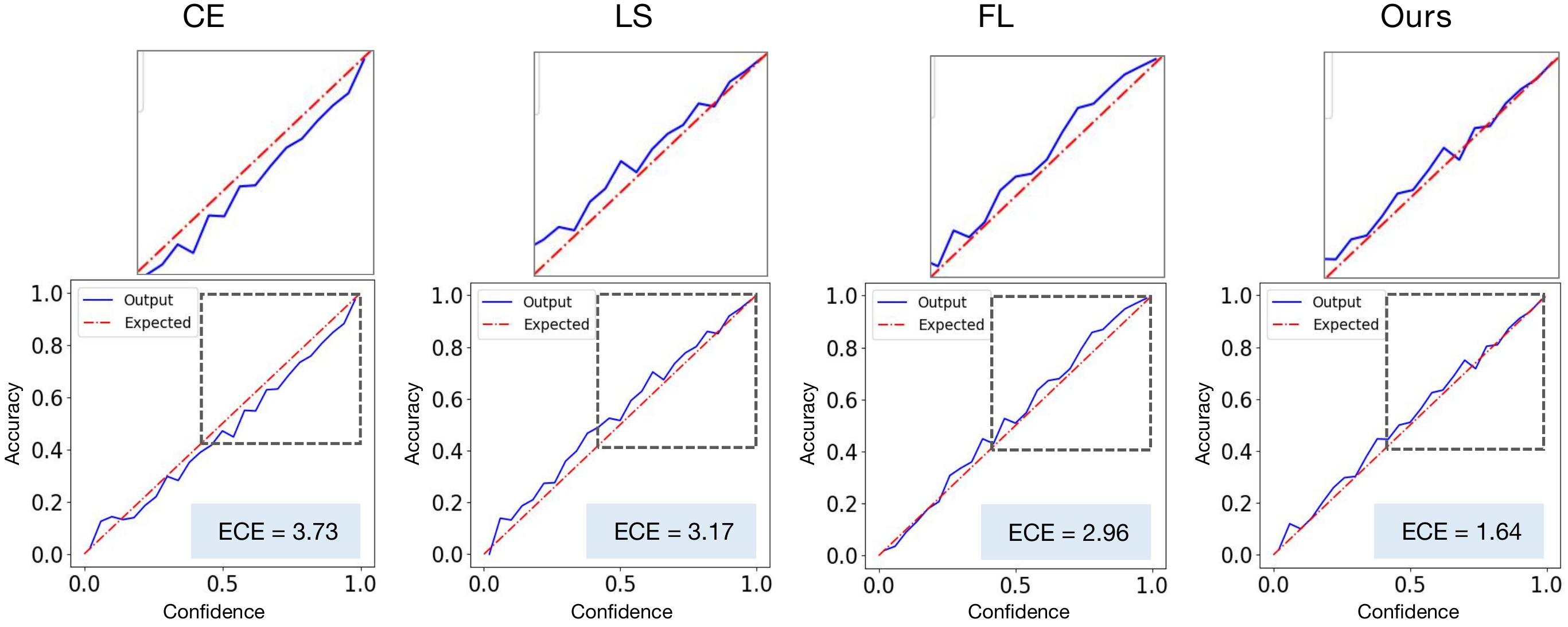}
    \caption{\textbf{Calibration visualizations of ResNet-50 on Tiny-ImageNet.} Reliability diagrams is computed with 25 bins. The zoom-in figures for part of the diagrams are also included, clearly showing the differences.}
    \label{fig:tiny-resnet50}
% \end{wrapfigure}
\end{figure*}

\section{Results with Vision Transformers (ViT)}
\label{sec:ap:vit}

\begin{table}[htb!]
%   \captionsetup{font=footnotesize}
  \caption{Results with Vision Transformer (ViT) model.
 %The fine-grained image classification dataset is shadowed in the table.%
  }
  \label{table:vit}
  \centering
  %\small \vspace{1em}
   \footnotesize
  \vspace{-1.em}
  \resizebox{0.9\columnwidth}{!}
  {
  \setlength{\tabcolsep}{2.0pt}
  \begin{tabular}{@{}llccccccccccccc@{}}
    \toprule
    %\multicolumn{2}{c}{Part}                   \\
    % \cmidrule(r){1-2}
    \multirow{2}{*}{\textbf{Dataset}}  &&  
    % \multicolumn{2}{c}{\textbf{CE}} &&
			\multicolumn{2}{c}{\textbf{LS}} && \multicolumn{2}{c}{\textbf{FL}} && \multicolumn{2}{c}{\textbf{FLSD}} && \multicolumn{2}{c}{\textbf{Ours}} \\
    && Acc & ECE && Acc & ECE && Acc & ECE && Acc & ECE\\
    \midrule
    % Tiny-ImageNet && Res-50 && 3.73 && 5.15 && 1.86 && \textbf{1.64} \\
    % \midrule
    CIFAR-10 && 
    % 98.50 & 0.67 && 
    \textbf{98.57} & 1.39 && 98.49 & 1.20 && 98.55 & 1.13 && \textbf{98.57} & \textbf{0.39} \\
    Tiny-ImageNet && 
    % 90.05 & 2.17 && 
    90.50 & 2.37 && 90.39 & 4.51 && 90.47 & 4.25 && \textbf{90.65} & \textbf{1.26} \\
    \bottomrule
  \end{tabular}
  }
%   \vspace{-1.5 em}
\end{table}

The recent study in \cite{wang2021rethinking} suggests that newer models, such as vision transformers (ViT) \cite{dosovitskiy2020vit}, are better calibrated than older models, such as convolutional neural networks. Inspired by these findings, we further evaluate the performance of the proposed method with ViT, whose results are presented in Table \ref{table:vit}.
In particular, we include the results obtained with a ViT on both CIFAR-10 and Tiny-ImageNet, demonstrating a similar trend, i.e., the proposed approach outperforms other calibration losses. This consolidates the message of this paper and further demonstrates the generalizability of the proposed loss.

\begin{figure*}[h]
    \centering
    % \vspace{-0.15in}
    \includegraphics[width=0.95\textwidth]{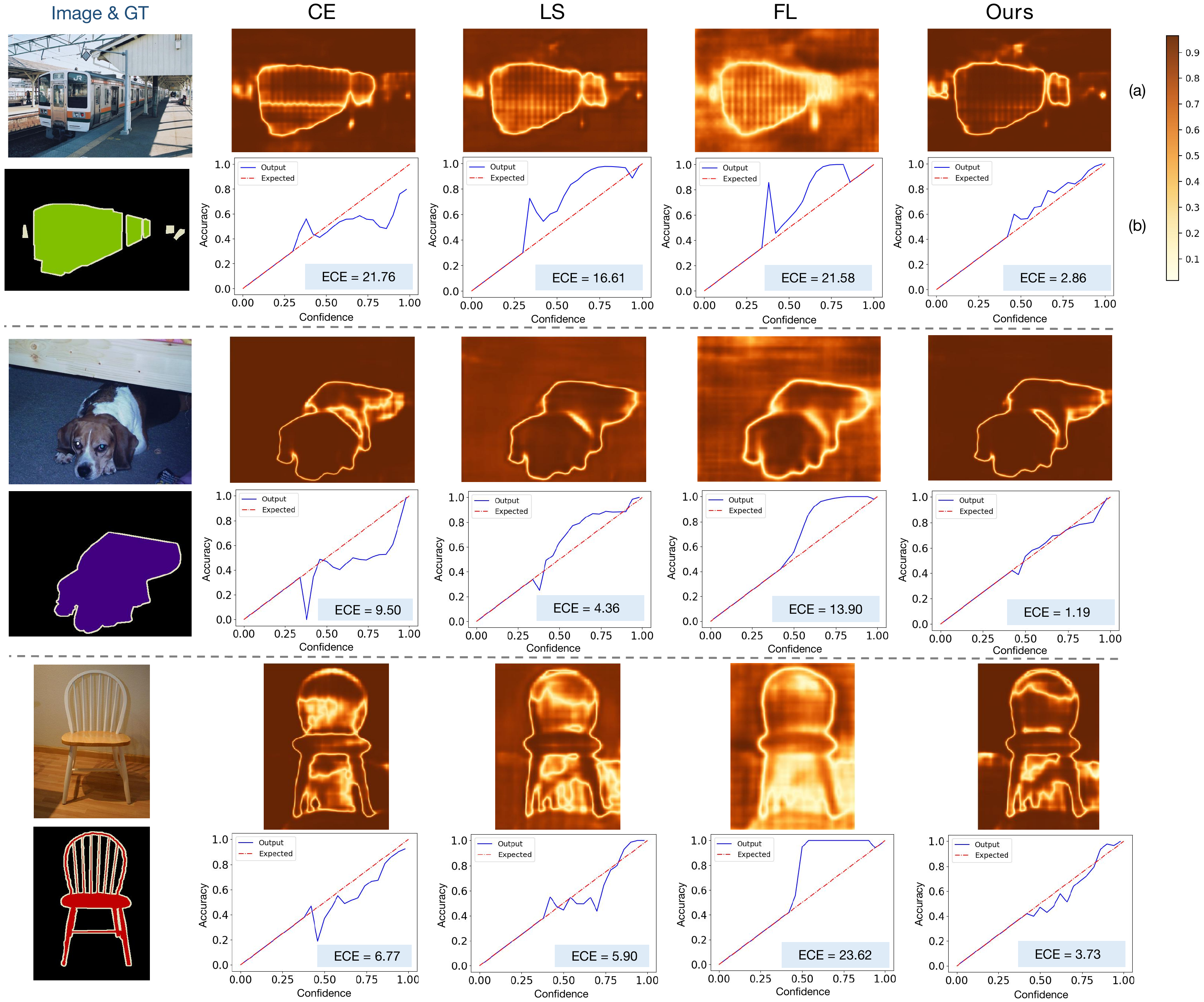}
    \vspace{-3mm}
    \caption{\textbf{Additional visual results on semantic segmentation.} %in our method.} 
    We present additional examples from the qualitative segmentation results on the PASCAL VOC 2012 validation set, showing the superiority by our method, in terms of calibration performance. In the left, we give the original image with ground-truth (GT) mask, then we present the \textbf{ confidence map (a)} and the \textbf{reliability diagram (b)} with the ECE (\%) score for each method. The value of confidence map represent the predicted confidence, i.e., the element of the soft-max probability for the winner class. It is noted that deeper color denotes higher confidence in the map, as shown in the legend at the upper right corner.}
    % \vspace{-2mm}
    \label{fig:ap:segment}
\end{figure*}

\section{Reliability diagram.}
\label{sec:ap:diagram}

We further investigate the calibration behaviour of the proposed model with reliability diagrams, whose results for Tiny-ImageNet with ResNet50 are shown in Figure \ref{fig:tiny-resnet50}. What we expect from a perfectly calibrated model is that its reliability diagram matches the dashed red line, where the output likelihood predicts perfectly the model accuracy. We first observe that the model trained with the standard cross entropy (\textit{first plot}) is overconfident, as its accuracy is mostly below the confidence values. Both state-of-the-art methods (\textit{second and third plots}) reverse this trend, and present reliability diagrams closer to the dashed line, which indicates that models trained with these losses are actually better calibrated. Even though both improve the calibration performance, an interesting observation is that the range accuracy vs confidence where they are better calibrated is indeed the opposite (LS provides better estimates for higher probabilities, whereas FL predictions are better calibrated in a low regime, close to 0). Last, we can observe that the reliability diagram slope provided by our method is much closer to a slope of 1, suggesting that the model is better calibrated. This observation is supported by the quantitative results reported in Section~\ref{sec:exp} of the main text.

\section{Additional visual results on segmentation}
\label{sec:ap:vis}

In Figure~\ref{fig:ap:segment}, we present additional qualitative examples from the VOC segmentation model.
As illustrated by the reliability diagrams (\textit{b}) for different methods, our method achieves the best calibration performance.
Regarding the confidence maps (\textit{a}), the results from the proposed model are also consistent with the fact that uncertainty occurs mainly on the boundary while confidence is higher within and outside the segmentation regions.
Note that all the trends are consistent with the examples shown in Figure~\ref{fig:segment} of the main text.

% \section{LogSumExp and Max functions relation}

%  \begin{figure}[htb]
%  % \begin{wrapfigure}{rt!}{0.5\textwidth}
%      \centering
%      \includegraphics[width=0.5\linewidth]{figures/PlotLogSumExp.png}
%      \caption{Visualizations of the LogSumExp function (\textit{in blue}) and the Max function (\textit{in red}), showing the connection between these two functions for $n=2$. We can easily observe that the Max function is a lower bound of LogSumExp. }
%      \label{fig:logsumexp}
%  % \end{wrapfigure}
%  \end{figure}

% In this section we provide more insights about the LogSumExp function, which is one of the key gradient in the proof of Prop.~\ref{prop:ls}.
% It can be seen as a smoothed version of the Max function. As the max function is not differentiable at points where the maximum is achieved in two different variables, the LogSumExp function is infinitely differentiable everywhere. Figure \ref{fig:logsumexp} depicts the connection between these two functions, showing that the Max function is a lower bound on the LogSumExp.

% %%%%%%%%% REFERENCES
% % \newpage

{\small
\bibliographystyle{ieee_fullname}
\bibliography{egbib}
}

\end{document}